\newtheorem{definition}{Definition}
\newtheorem{lemma}{Lemma}
\newtheorem{theorem}{Theorem}
\newtheorem{proposition}{Proposition}
\newtheorem{remark}{Remark}
\newtheorem{assumption}{Assumption}
\title{Privacy-Preserving Reinforcement Learning Beyond Expectation}
	\author{Arezoo Rajabi$^{1}$, Bhaskar Ramasubramanian$^{2}$, Abdullah Al Maruf$^{1}$, Radha Poovendran$^{1}$%
		\thanks{$^{1}$Network Security Lab, Department of Electrical and Computer Engineering, 
			University of Washington, Seattle, WA 98195, USA. \newline
			{\tt\small \{rajabia, maruf3e, rp3\}@uw.edu}}
\thanks{$^2$Electrical and Computer Engineering, Western Washington University, Bellingham, WA 98225, USA.
			{\tt\{ramasub\}@wwu.edu}}
%
	}
\begin{document}	
\maketitle

\begin{abstract}
Cyber and cyber-physical systems equipped with machine learning algorithms such as autonomous cars share environments with humans. 
In such a setting, it is important to align system (or agent) behaviors with the preferences of one or more human users. 
We consider the case when an agent has to learn behaviors in an unknown environment. 
Our goal is to capture two defining characteristics of humans: i) a tendency to assess and quantify risk, and ii) a desire to keep decision making hidden from external parties. 
We incorporate cumulative prospect theory (CPT) into the objective of a reinforcement learning (RL) problem for the former. 
For the latter, we use differential privacy. 
We design an algorithm to enable an RL agent to learn policies to maximize a CPT-based objective in a privacy-preserving manner and establish guarantees on the privacy of value functions learned by the algorithm when rewards are sufficiently close. 
This is accomplished through adding a calibrated noise using a Gaussian process mechanism at each step. 
Through empirical evaluations, we highlight a privacy-utility tradeoff and demonstrate that the RL agent is able to learn behaviors that are aligned with that of a human user in the same environment in a privacy-preserving manner. 
\end{abstract}

\section{Introduction}\label{Sec:Intro}

Complex cyber and cyber-physical systems (CPS), including autonomous cars and drones, rely on the seamless integration of computation and physical components. 
A CPS might depend on machine learning algorithms for decision making due to the large amounts of data generated during its operation and limited access to models of its environment. 
Reinforcement learning (RL) \cite{sutton2018reinforcement} and optimal control \cite{bertsekas2017dynamic} are two paradigms that have been commonly leveraged to choose actions to maximize an expected reward over the horizon of system operation 
when dynamic system behaviors are represented as Markov decision processes (MDP) \cite{puterman2014markov}.

A \emph{risk-neutral} approach to decision making involves learning strategies (sequence of actions) to maximize an expected reward, where the reward signal is provided by the environment. 
Learning strategies to satisfy risk-neutral objectives have been successfully implemented in multiple domains, including robotics, autonomous vehicles, games, and mobile networks \cite{hafner2011reinforcement, mnih2015human, silver2016mastering, zhang2019deep, sadigh2016planning, yan2018data, you2019advanced}. 
Although these methods to learn strategies are tractable and efficient, rational and risk-neutral decision making using the expected utility is often not sufficient to model decision making in real-world CPS. 

Decision making in realistic settings is often risk-sensitive. 
It is becoming increasingly common for learning-based CPS to share an environment with human users \cite{seshia2015formal, nikolakis2019cyber, xiao2020fresh}. 
In such a situation, it will become important for the system to be aware of risk-sensitive and irrational behaviors of users. 
Due to various cognitive and emotional biases, human users can exhibit both risk-seeking and risk-averse behaviors. 
In these cases, expected utility-based frameworks are not adequate to describe human decision making, since humans might have a different perception of both, the utility and the probabilistic outcome as a consequence of their decisions \cite{kahneman1979prospect}. 

To effectively capture preferences of humans for certain outcomes over certain others, we use insights from empirical models of human behavior from the social sciences. 
These models have shown that humans derive utility relative to a reference point \cite{schmidt2003reference}. 
There is also a tendency to be more sensitive to losses than gains, and rather than using actual probabilities to assess outcomes, very small and very large probabilities are distorted \cite{barberis2013thirty}. 
As an illustration, human drivers on the road i) are more sensitive to changes in speed, than its absolute value; ii) are more averse to being passed (loss) than passing another car (gain); iii) overestimate small probabilities of engine failures and underestimate large probabilities of running out of gas. 
Cumulative prospect theory (CPT), introduced in \cite{tversky1992advances}, is a framework that incorporates the above properties. 
CPT uses a non-linear utility function to transform outcomes and a non-linear weighting function to distort probabilities in the cumulative distribution function. 
Utility and weighting functions corresponding to gains and losses can be different to model the possibility that these are often interpreted in different ways by a human. 

Our recent work in \cite{ramasubramanian2021reinforcement} developed RL algorithms for CPT-based decision making when a model of the system was not known. 
We established theoretical guarantees on convergence and demonstrated that behaviors of agents using CPT-based policies closely mimicked those of a human user in the same environment. 
In this paper, we focus on another defining characteristic of human behavior: \emph{a desire to keep decision-making and its outcomes hidden from external, and possibly adversarial parties}. 
We use the foundations and tools of differential privacy \cite{dwork2014algorithmic}, a security property that makes it difficult for an adversary to discern information about a system by providing probabilistic guarantees on the indistinguishability of its observations. 
This makes it unlikely that an adversary will learn anything of specific interest or meaningful about sensitive data \cite{liao2019prospect}. 

To assimilate this characteristic within an RL framework, it is important to identify the sensitive information that might be gleaned through knowledge of individual components of the decision-making procedure. 
Rewards received by an agent are an important descriptor of the task that needs to be completed; transition probabilities from an MDP representation reveal information about the consequence of taking a particular action; a trajectory of states visited by the agent can disclose tracking information \cite{wang2019privacy}. 
Previous research has established that the rewards are the most vulnerable component \cite{ng2000algorithms, abbeel2004apprenticeship}. 
Consequently, we propose a method to incorporate differential privacy into risk-sensitive reinforcement learning, to ensure that rewards which are `sufficiently close' to each other will be indistinguishable. 

To the best of our knowledge, the incorporation of a privacy-preserving mechanism into prospect-theoretic reinforcement learning has not been previously investigated. Such a framework will ensure that an autonomous agent can learn behaviors consistent with two defining characteristics of humans: i) a propensity to assess and quantify risk, and ii) a desire to keep decision making hidden from external parties. 
We make the following contributions: 
\begin{itemize}
\item We design an algorithm, \emph{PP-CPT-RL}, to enable an RL agent to learn policies to maximize a CPT-based objective in a privacy-preserving manner. Our algorithm adds noise to the CPT-value of a state-action pair at each step through a Gaussian process mechanism. 

\item We establish guarantees on the privacy of value functions learned by PP-CPT-RL when rewards are sufficiently close to each other using differential privacy. 

\item We evaluate the PP-CPT-RL algorithm in a continuous-state environment where an agent has to reach a target while avoiding obstacles. 
Our experiments highlight a privacy-utility tradeoff, and we demonstrate that PP-CPT-RL allows agents to learn optimal policies while maintaining indistinguishability of rewards they receive. Moreover, agent behaviors are aligned with those of a human who might be placed in the same environment. 
\end{itemize}

The remainder of this paper is organized as follows: 
Section \ref{Sec:RelWork} provides an overview of related literature on risk-sensitive RL and differential privacy. 
Section \ref{Sec:Preliminaries} establishes notation and describes necessary background material. 
We summarize our CPT-based RL framework from \cite{ramasubramanian2021reinforcement} in Section \ref{Sec:CPT-RL}. 
Section \ref{Sec:PP-CPT-RL} details the design of the PP-CPT-RL algorithm, and establishes guarantees on its privacy-preserving nature. 
We report results of our empirical evaluations in Section \ref{Sec:Expt}, and Section \ref{Sec:Conclusion} presents our conclusions. 

\section{Related Work}\label{Sec:RelWork}

This section summarizes related work in risk-sensitive and differentially private reinforcement learning. 

Incorporating risk into decision-making has been typically carried out by replacing the average utility with the average over a function of the utility \cite{shen2013risk, shen2014risk}. 
Examples include a mean-variance tradeoff \cite{markowitz1952portfolio, tamar2012policy, mannor2013algorithmic}, exponential 
utility \cite{howard1972risk, whittle1990risk, borkar2002q}, and conditional value at risk (CVaR) \cite{rockafellar2002conditional}. 
CVaR is the average cost, given that the cost takes sufficiently large values, and has strong theoretical justification for its use \cite{ahmadi2021constrained, chapman2021risk, lindemann2020control}. 
An axiomatic characterization of risk considerations for a robot was presented in \cite{majumdar2020should}. 
Risk-sensitivity has also been represented as a constraint to be satisfied while maximizing an average utility \cite{chow2017risk, prashanth2018risk}. 
Cumulative prospect theory (CPT) \cite{tversky1992advances} models behaviors of a decision-maker that is risk-averse with gains and risk-seeking with losses, and distorts extremely high and low probability events. 
Optimization of a CPT-based cost for MDPs was studied in \cite{jie2018stochastic, lin2018probabilistically}. 
Our previous work \cite{ramasubramanian2021reinforcement} optimized a CPT-based cost in an RL framework, and designed CPT-RL algorithms; a similar paradigm was concurrently proposed in \cite{borkar2021prospect}.  
While the above works provide promising solutions for risk-sensitive decision makers, incorporating CPT-based objectives into an RL framework has been relatively less studied. 
Further, the consideration of a desire of an RL agent to maintain privacy of its decision making has not been explored using CPT. 

Differential privacy has been used to reason about indistinguishability of trajectories of a dynamical system in \cite{cortes2016differential, yazdani2018differentially}. 
An overview of results that use differential privacy in control is presented in the survey \cite{han2018privacy}.
A characterization of differential privacy for discrete-state Markov chains was presented in \cite{chistikov2019asymmetric}, and this was extended to trajectories of discrete MDPs in our previous work \cite{ramasubramanian2020privacy}. 
Algorithmic guarantees on the differential privacy of policies in RL using Monte-Carlo techniques were provided in \cite{balle2016differentially}. 
An algorithm to synthesize privacy-preserving exploration policies that simultaneously achieved strong regret bounds for episodic RL in discrete environments was proposed in \cite{vietri2020private}. 
A lower bound for regret minimization in finite horizon MDPs with strong privacy-preserving guarantees was presented in \cite{garcelon2021local}. 
The authors of \cite{gohari2020privacy} developed a policy synthesis algorithm that protected the privacy of transition probabilities of MDPs. 
Guarantees on the privacy of value functions of a deep Q-learning algorithm in environments with continuous states using differential privacy were established in \cite{wang2019privacy}. 
Although the above works integrate differential privacy into an MDP or an RL framework, these examine the case where an expected reward needs to be maximized. 
In comparison, we establish privacy guarantees on value functions of a CPT-based deep RL algorithm in environments with continuous states.

\section{Preliminaries}\label{Sec:Preliminaries}

This section introduces background on reinforcement learning, cumulative prospect theory, and differential privacy.

\subsection{MDPs and RL}

Let $(\Omega, \mathcal{F}, \mathcal{P})$ denote a probability space, where $\Omega$ is a sample space, $\mathcal{F}$ is a $\sigma-$algebra of subsets of $\Omega$, and $\mathcal{P}$ is a probability measure on $\mathcal{F}$. 
A random variable (r.v.) is a map $Y: \Omega \rightarrow \mathbb{R}$. 
We assume that the environment of the RL agent is described by a Markov decision process (MDP) \cite{puterman2014markov}. 

\begin{definition}
An MDP is a tuple $\mathcal{M}:= (S, A, \rho_0, \mathbb{P}, r, \gamma)$, where $S$ is a finite set of states, $A$ is a finite set of actions, and $\rho_0$ is a probability distribution over the initial states. 
$\mathbb{P}(s'|s,a)$ is the probability of transiting to state $s'$ when action $a$ is taken in state $s$. 
$r: S \times A \rightarrow \mathbb{R}$ is the reward obtained by the agent when it takes action $a$ in state $s$. 
$\gamma \in (0,1]$ is a discounting factor.
\end{definition}

An RL agent typically does not have knowledge of the transition function $\mathbb{P}$. 
Instead, it obtains a (finite) reward $r$ for each action that it takes. 
Through repeated interactions with the environment, the agent seeks to learn a policy $\pi$ in order to maximize an objective $\mathbb{E}_\pi[\sum_t \gamma^t r(s_t,a_t)]$ \cite{sutton2018reinforcement}. 
A \emph{policy} is a probability distribution over the set of actions at a given state, and is denoted $\pi(\cdot|s)$. 
In realistic scenarios, the expected reward might not be an adequate representation of decision-making. 
This will necessitate the incorporation of risk-sensitivity into the RL framework.

\subsection{Risk Measures and Cumulative Prospect Theory}

For a set $\mathcal{Y}$ of random variables on $\Omega$, a \emph{risk measure} or \emph{risk metric} is a map $\rho: \mathcal{Y} \rightarrow \mathbb{R}$ \cite{majumdar2020should}. 

\begin{definition}
A risk metric is coherent if it satisfies the following properties for all $Y, Y_1, Y_2 \in \mathcal{Y}, d \in \mathbb{R}, m \in \mathbb{R}_{\geq 0}$: 
\begin{enumerate}
\item Monotonicity: $Y_1(\omega) \leq Y_2(\omega)$ for all $\omega \in \Omega$ $\Rightarrow$ $\rho(Y_1) \leq \rho(Y_2)$; 
\item Translation invariance: $\rho(Y+d) = \rho(Y) + d$; 
\item Positive homogeneity: $\rho(mY) = m \cdot \rho(Y)$; 
\item Subadditivity: $\rho(Y_1+Y_2) \leq \rho(Y_1) + \rho(Y_2)$. 
\end{enumerate}
\end{definition}

The last two properties together ensure that a coherent risk metric will also be convex. 
%
%
%
The risk metric that we adopt in this paper is informed from cumulative prospect theory \cite{tversky1992advances}, and is not coherent. 
%
%
Human players or operators have been known to demonstrate a preference to play safe with gains and take risks with losses. 
Further, they tend to \emph{deflate} high probability events, and \emph{inflate} low probability events. 
%
%
Cumulative prospect theory (CPT) is a risk measure that has been empirically shown to capture human attitude to risk \cite{tversky1992advances, jie2018stochastic}. 
This risk metric uses two \emph{utility functions} $u^+$ and $u^-$, corresponding to gains and losses, and \emph{weight functions} $w^+$ and $w^-$ that reflect the fact that value seen by a human subject is nonlinear in the underlying probabilities \cite{barberis2013thirty}. 

\begin{definition}\label{CPTValueDefn}
The CPT-value of a continuous r.v. $Y$ is: 
{\small
\begin{align}
\rho_{cpt}(Y)&:= \int_0^\infty w^+(\mathbb{P}(u^+(Y) > z))dz \nonumber \\&\qquad \qquad- \int_0^\infty w^-(\mathbb{P}(u^-(Y) > z))dz,  \label{CPTValue}
\end{align}
}%
where utility functions $u^+, u^- : \mathbb{R} \rightarrow \mathbb{R}_{\geq 0}$ are continuous, have bounded first moment such that $u^+(x) = 0$ for all $x \leq 0$, and monotonically non-decreasing otherwise, and $u^-(x) = 0$ for all $x \geq 0$, and monotonically non-increasing otherwise. 
The probability weighting functions $w^+, w^-: [0,1] \rightarrow [0,1]$ are Lipschitz continuous and non-decreasing, and satisfy $w^+(0) = w^-(0) = 0$ and $w^+(1) = w^-(1) = 1$. 
\end{definition}

When $Y$ is a discrete r.v. with finite support, let $p_i$ denote the probability of incurring a gain or loss $y_i$, where $y_1 \leq \dots \leq y_l \leq 0 \leq y_{l+1} \leq \dots y_K$, for $i = 1,2,\dots,K$. 
Define $F_k:= \sum_{i=1}^k p_i$ for $k \leq l$ and $F_k:= \sum_{i=k}^K p_i$ for $k > l$. 

\begin{definition}
The CPT-value of a discrete r.v. $Y$ is:
{\small
\begin{align}
&\rho_{cpt}(Y)\label{CPTValueDiscrete}\\&:= \bigg(\sum_{i=l+1}^{K-1} u^+(y_i) \big(w^+(F_i) - w^+(F_{i+1}) \big)+u^+(y_K)w^+(p_K) \bigg) \nonumber \\
&\quad- \bigg(u^-(y_1)w^-(p_1)+\sum_{i=2}^{l} u^-(y_i) \big(w^-(F_i) - w^-(F_{i-1}) \big) \bigg) \nonumber
\end{align}
}%
\end{definition}

The function $u^+$ is typically concave on gains, while $-u^-$ is typically convex on losses \cite{tversky1992advances}. 
The distortion of extremely low and extremely high probability events by humans can be represented by a weight function that takes an \emph{inverted S-shape}- i.e., it is concave for small probabilities, and convex for large probabilities (e.g., $w(\kappa)= \exp(-(-\ln \kappa)^\eta), 0 < \eta < 1$) \cite{tversky1992advances, prelec1998probability}. 
%
%
The CPT-value generalizes other risk metrics 
for appropriate choices of weighting functions. 
For example, when $w^+, w^-$ are identity functions, and $u^+(x) = x, x \geq 0$, $u^-(x) = -x, x \leq 0$, we obtain $\rho_{cpt}(Y) = \mathbb{E}[Y]$. 


\subsection{Gaussian Processes} \label{GausProc}

A Gaussian vector-valued random variable $Y$ is denoted $\mathcal{N}(\mu, \Sigma)$, where $\mu$ is the mean vector, and $\Sigma$ is a symmetric, positive definite covariance matrix. 
Consider a partition of the Gaussian random vector into two sets such that $Y = \begin{bmatrix}Y_1&Y_2 \end{bmatrix}^T$ with corresponding partitions of their means and covariances as $\mu = \begin{bmatrix}\mu_1&\mu_2 \end{bmatrix}^T$ and $\Sigma = \begin{bmatrix}\Sigma_{11}&\Sigma_{12}\\ \Sigma_{12}^T&\Sigma_{22}\end{bmatrix}$, then the following properties hold: 
\begin{enumerate}
\item $Y_1 \sim \mathcal{N}(\mu_1, \Sigma_{11})$ and $Y_2 \sim \mathcal{N}(\mu_2, \Sigma_{22})$
\item $Y_1|Y_2 \sim \mathcal{N}(\mu_1+\Sigma_{12}\Sigma_{22}^{-1}(Y_2-\mu_2), \Sigma_{11}-\Sigma_{12}\Sigma_{22}\Sigma_{12}^T)$
\item $Y_2|Y_1 \sim \mathcal{N}(\mu_2+\Sigma_{12}^T\Sigma_{11}^{-1}(Y_1-\mu_1), \Sigma_{22}-\Sigma_{12}^T\Sigma_{11}\Sigma_{12})$
\end{enumerate}

Gaussian processes (GPs) generalize the concept of a Gaussian distribution over discrete random variables to the idea of a Gaussian distribution over continuous functions \cite{williams2006gaussian}. 
GPs are particularly useful in uncertainty quantification due to their ability to simultaneously track the evolution of the mean and the covariance of a distribution. 

\begin{definition}\label{GaussProcDefn}
A Gaussian process (GP) is a collection of r.v., any finite number of which have a joint Gaussian distribution. 
A GP $f(x)$ is specified by its mean $m(x)$ and covariance function $K(x,x')$, and we write $f \sim \mathcal{G}(m, K)$. 
\end{definition}

For 
$f \sim \mathcal{G}(g, \sigma^2K)$, 
let $f_{n0}:=\{f(x_0), f(x_2),$ $\dots,f(x_{2n})\}$ and $f_{n1}:=\{f(x_1),f(x_3),\dots,f(x_{2n-1})\}$ where $x_i = i/2n$, $i=0,\dots, 2n$. Let $\beta_n:= \beta/2n$. 
Then, from Definition \ref{GaussProcDefn}, $f_{n1}|f_{n0} \sim \mathcal{N}(g_{n1}+K_{10}K_{00}^{-1}(f_{n0}-g_{n0}), \sigma^2(K_{11}-K_{10}K_{00}^{-1}K_{10}^T))$. 

\subsection{Differential Privacy}

Differential privacy is a property that ensures that private data of an agent is protected, while allowing for statistical inferences from aggregates of the data \cite{dwork2014algorithmic}. 
This makes it unlikely that an adversary will learn anything meaningful about sensitive data. 
Attractive features of differential privacy include compositionality, resilience to post-processing, and robustness to side information. 
The notion of differential privacy is mathematically defined using a notion of \emph{neighboring data points} to characterize data points that are sufficiently close to each other according to some metric/ norm. 
Let $d, d' \in \mathcal{D}$ be neighboring inputs.

\begin{definition} \label{DiffPrivDefnCts} 
A randomized mechanism $\mathbb{M}:\mathcal{D} \rightarrow \mathcal{U}$ satisfies $(\epsilon, \delta)-$differential privacy if for any two neighboring inputs $d, d'$ and for any subset of outputs $\mathcal{Z} \subseteq \mathcal{U}$, $\mathbb{P}(\mathbb{M}(d) \in \mathcal{Z}) \leq exp(\epsilon)\mathbb{P}(\mathbb{M}(d') \in \mathcal{Z}) + \delta$. 

The sensitivity of a mechanism $\mathbb{M}$ is defined as $\Delta_{\mathbb{M}}:= \sup\{d, d' \in \mathcal{D}: ||\mathbb{M}(d) - \mathbb{M}(d')||\}$ for some norm on $\mathcal{U}$. 
\end{definition}

An example of $\mathbb{M}$ is the Gaussian mechanism. 
When $\mathcal{U} = \mathbb{R}^n$, $\mathcal{N}(0, \sigma^2 I)$ is added to the output $\mathbb{M}(d)$, and the norm on $\mathcal{U}$ is the $\ell^2-$norm. 
When $\mathcal{U}$ is a reproducing kernel Hilbert space (RKHS), the norm is the RKHS norm \cite{hall2013differential}, and a Gaussian process noise $\mathcal{G}(0, \sigma^2K)$ is added to $\mathbb{M}(d)$. 
%

\section{CPT-based Reinforcement Learning}\label{Sec:CPT-RL}

This section introduces a reinforcement learning paradigm that maximizes a CPT-based reward. 
We direct the reader to \cite{ramasubramanian2021reinforcement} for proofs of the results. 
Note that while \cite{ramasubramanian2021reinforcement} minimized the sum of CPT-based costs, in this paper, we maximize the sum of CPT-based rewards. 

In order to assess the quality of taking an action $a$ at a state $s$, we introduce the notion of the \emph{CPT-value of state-action pair at time $t$ and following policy $\pi$} subsequently. 
We denote this by $Q^\pi_{cpt}(s,a)$ and will refer to it as \emph{CPT-Q}. 
\emph{CPT-Q} is defined in the following manner: 
\begin{align}
&Q^\pi_{cpt}(s_t,a_t):= \rho_{cpt}(r(s_t, a_t)\label{CPT-Q-Iterative}\\ &+ \gamma \sum_{s_{t+1}} \mathbb{P}(s_{t+1}|s_t,a_t) \sum_{a_{t+1}} \pi(a_{t+1}|s_{t+1})Q^\pi_{cpt}(s_{t+1},a_{t+1})). \nonumber
\end{align}

$Q^\pi_{cpt}(s,a)$ will be bounded when $|r(s,a)| < \infty$ and $\gamma \in (0,1)$. 
In reinforcement learning, transition probabilities and rewards are typically not known apriori. 
In the absence of a model, the agent will have to estimate $Q^\pi_{cpt}(s,a)$ and learn `good' policies by exploring its environment. 
Since $Q^\pi_{cpt}(s,a)$ is evaluated for each action in a state, this quantity can be estimated without knowledge of the transition probabilities. 
This is in contrast to \cite{lin2018probabilistically}, where a model of the system was assumed to be available, and costs were known.

The \emph{CPT-value of a state $s$ when following policy $\pi$} is defined as $V^\pi_{cpt}(s_t) := \sum_{a_t} \pi(a_t|s_t)Q^\pi_{cpt}(s_t,a_t)$. 
We will refer to $V^\pi_{cpt}(s)$ as \emph{CPT-V}. 
We observe that \emph{CPT-V} satisfies: 
\begin{align}
V^\pi_{cpt}(s_t)&= \rho_{cpt}(r(s_t, a^\pi_t)\label{CPT-V-Iterative}\\&\qquad \qquad+\gamma \sum_{s_{t+1}} \mathbb{P}(s_{t+1}|s_t,a^\pi_t) V^\pi_{cpt}(s_{t+1})). \nonumber
\end{align}
Denote the maximum \emph{CPT-V }at a state $s$ by $V^*_{cpt}(s)$. 
Then, $V^*_{cpt}(s) = \sup_\pi V^\pi_{cpt}(s)$. 

\begin{remark}
To motivate the construction of this framework, let the random variable $R(s_0) = \sum_{i=0}^\infty \gamma^i r(s_i, a^\pi_i)$ denote the infinite horizon cumulative discounted reward starting from state $s_0$. 
The objective in a typical RL problem is 
to determine a policy $\pi$ to maximize the expected reward, denoted $\mathbb{E}_\pi[R(s_0)]$. 
The linearity of the expectation operator allows us to write $\mathbb{E}_\pi[R(s_0)] = \mathbb{E} [r(s_0, a_0) + \gamma \mathbb{E} [r(s_1, a_1) + \dots |s_1] |s_0]$. 
In this work, we are interested in maximizing the sum of CPT-based rewards over the horizon of interest. 
This will correspond to replacing the conditional expectation at each time-step with $\rho_{cpt}(\cdot)$.
\end{remark}

We defined a \emph{CPT-Q-iteration} operator in  \cite{ramasubramanian2021reinforcement}: 
\begin{align}
(\mathcal{T}_\pi Q^\pi_{cpt})(s,a)&:=\rho_{cpt}(r(s, a) \label{CPT-Q-Operator}\\ &+ \gamma \sum_{s'} \mathbb{P}(s'|s,a) \sum_{a'} \pi(a'|s')Q^\pi_{cpt}(s',a')), \nonumber
\end{align}
which was instrumental in establishing the convergence of CPT-Q-learning in Equation (\ref{CPT-Q-Iterative}). 
We state a result from \cite{ramasubramanian2021reinforcement}. 

\begin{definition}
A policy $\pi'$ is said to be \emph{improved} compared to policy $\pi$ if and only if for all $s \in S$, $V^{\pi'}_{cpt}(s) \geq V^\pi_{cpt}(s)$. 
\end{definition}

\begin{proposition}\cite{ramasubramanian2021reinforcement} 
Let the functions $w^+, w^-, u^+, u^-$ be according to Definition \ref{CPTValueDefn}. 
Assume that $u^+, u^-$ are invertible and differentiable with monotonically non-increasing derivatives. 
Consider policies $\pi$ and $\pi'$ such that $Q^{\pi'}_{cpt}(s,a) \geq Q^\pi_{cpt}(s,a)$ for all $(s,a) \in S \times A$, and $\pi'$ is improved compared to $\pi$. 
Then, $(\mathcal{T}_\pi Q^\pi_{cpt})$ is monotone (i.e., $(\mathcal{T}_{\pi'} Q^{\pi'}_{cpt}) \geq (\mathcal{T}_\pi Q^\pi_{cpt})$) and a contraction (i.e., $\||\mathcal{T}_\pi Q^1_{cpt} - \mathcal{T}_\pi Q^2_{cpt}|| \leq \gamma ||Q_{cpt}^1 - Q_{cpt}^2||$). 
%
\end{proposition}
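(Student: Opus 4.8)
\section*{Proof proposal}

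The plan is to prove the two assertions separately, both resting on structural properties of the functional $\rho_{cpt}$ rather than on coherence (which $\rho_{cpt}$ lacks). The workhorse for monotonicity is the elementary fact that $\rho_{cpt}$ is itself monotone: if $Y_1\le Y_2$ pointwise then $\rho_{cpt}(Y_1)\le\rho_{cpt}(Y_2)$. This follows directly from Definition \ref{CPTValueDefn}, since $u^+$ non-decreasing gives $u^+(Y_1)\le u^+(Y_2)$, hence $\mathbb{P}(u^+(Y_1)>z)\le\mathbb{P}(u^+(Y_2)>z)$ for every $z$, and composing with the non-decreasing $w^+$ preserves this under the integral; the loss term moves oppositely because $u^-$ is non-increasing, and the leading minus sign flips it back, so the gain and loss contributions are both monotone in $Y$.

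For the operator monotonicity, I would rewrite the argument of $\rho_{cpt}$ using $V^\pi_{cpt}(s')=\sum_{a'}\pi(a'|s')Q^\pi_{cpt}(s',a')$, so that $(\mathcal{T}_\pi Q^\pi_{cpt})(s,a)=\rho_{cpt}\bigl(r(s,a)+\gamma\sum_{s'}\mathbb{P}(s'|s,a)V^\pi_{cpt}(s')\bigr)$. Because $\pi'$ is improved over $\pi$, we have $V^{\pi'}_{cpt}(s')\ge V^\pi_{cpt}(s')$ for all $s'$, and since $\gamma>0$ and the transition weights are non-negative, the argument associated with $\pi'$ dominates that associated with $\pi$ pointwise (their difference is a deterministic non-negative quantity, the stochastic reward cancelling). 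Applying monotonicity of $\rho_{cpt}$ then yields $(\mathcal{T}_{\pi'}Q^{\pi'}_{cpt})\ge(\mathcal{T}_\pi Q^\pi_{cpt})$.

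For the contraction, fix $(s,a)$ and set $Y^i:=r(s,a)+\gamma\sum_{s'}\mathbb{P}(s'|s,a)\sum_{a'}\pi(a'|s')Q^i_{cpt}(s',a')$. The key observation is that $Y^1$ and $Y^2$ are built from the same stochastic reward and the same already-averaged next-state term, so they differ only by the deterministic constant $c:=\gamma\sum_{s'}\mathbb{P}(s'|s,a)\sum_{a'}\pi(a'|s')\bigl(Q^1_{cpt}(s',a')-Q^2_{cpt}(s',a')\bigr)$, and since $\mathbb{P}$ and $\pi$ are convex weights, $|c|\le\gamma\|Q^1_{cpt}-Q^2_{cpt}\|$. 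The problem thus reduces to the shift estimate $|(\mathcal{T}_\pi Q^1_{cpt})(s,a)-(\mathcal{T}_\pi Q^2_{cpt})(s,a)|=|\rho_{cpt}(Y^2+c)-\rho_{cpt}(Y^2)|\le|c|$. Assuming without loss of generality that $c\ge0$, I would use the layer-cake identity to turn the gain-part difference into a difference of distorted expectations, bound the distortion by the Lipschitz modulus of $w^+$, and then control $\mathbb{E}[u^+(Y^2+c)-u^+(Y^2)]$ by the mean value theorem; here the hypothesis that $u^+$ is differentiable with non-increasing derivative (i.e. concave) bounds the increment by the derivative at the origin times $c$. The loss part is handled symmetrically using the non-increasing $u^-$ and the Lipschitz $w^-$. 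Collecting the pieces gives $|\rho_{cpt}(Y^2+c)-\rho_{cpt}(Y^2)|\le|c|\le\gamma\|Q^1_{cpt}-Q^2_{cpt}\|$, and taking the supremum over $(s,a)$ produces the claimed contraction.

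The main obstacle is precisely this shift estimate. A coherent metric would satisfy translation invariance and give $\rho_{cpt}(Y^2+c)-\rho_{cpt}(Y^2)=c$ for free, but $\rho_{cpt}$ is not coherent, so this must be replaced by a one-sided Lipschitz-in-shift bound derived from the structural hypotheses. The delicate point is guaranteeing that the combined constant arising from the Lipschitz moduli of $w^+,w^-$ and the derivative bounds on $u^+,u^-$ does not exceed one, so that the contraction factor is exactly $\gamma$; this is where the normalization in Definition \ref{CPTValueDefn} (weight functions into $[0,1]$ with the stated endpoint conditions, together with the concavity and differentiability assumptions on the utilities) is used, and where extra care is required near the kink at the origin where $u^+$ and $u^-$ switch between their active and zero regimes.
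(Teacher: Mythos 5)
Your monotonicity half is correct and is the standard route: monotone $u^{\pm}$ and $w^{\pm}$ make $\rho_{cpt}$ monotone with respect to pointwise dominance, and policy improvement gives $V^{\pi'}_{cpt}(s')\geq V^{\pi}_{cpt}(s')$, so the argument of $\rho_{cpt}$ under $\pi'$ dominates that under $\pi$ with the same reward randomness. Your reduction of the contraction to a shift estimate is also legitimate as bookkeeping: in the operator (\ref{CPT-Q-Operator}) the next-state term is an already-averaged number, so the two arguments differ by a constant $c$ with $|c|\leq\gamma\|Q^{1}_{cpt}-Q^{2}_{cpt}\|$ (and even if the randomness were over $s'$, monotonicity of $\rho_{cpt}$ would reduce to the same estimate). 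Note that this paper does not reprint the proof — it defers to \cite{ramasubramanian2021reinforcement} — so the comparison must be made against the cited source; your overall skeleton is plausible as a reconstruction.

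The genuine gap is the shift estimate itself: $|\rho_{cpt}(Y+c)-\rho_{cpt}(Y)|\leq|c|$ does not follow from Definition \ref{CPTValueDefn} or from the proposition's added hypotheses, and is in fact false for the canonical CPT functions, including the ones this paper uses in Section \ref{Sec:Expt}. Your layer-cake/mean-value chain yields at best $|\rho_{cpt}(Y+c)-\rho_{cpt}(Y)|\leq K|c|$ with $K=\max(\mathrm{Lip}(w^{+}),\mathrm{Lip}(w^{-}))\cdot\max(\sup (u^{+})',\sup|(u^{-})'|)$, and $K>1$ except in degenerate cases: any non-identity weighting function with $w(0)=0$, $w(1)=1$ has average slope one on $[0,1]$ and hence Lipschitz modulus strictly greater than one (the paper's inverted-S weights are not even globally Lipschitz near the endpoints), and a concave utility has its largest derivative at the origin, where nothing caps it at one — for $u^{+}(x)=|x|^{0.88}$ it is unbounded. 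Concretely, take $Y\equiv 0$ deterministic and $c\in(0,1)$: then $\rho_{cpt}(Y+c)-\rho_{cpt}(Y)=u^{+}(c)=c^{0.88}>c$, and the weights play no role since $w^{\pm}(1)=1$. So the "normalization in Definition \ref{CPTValueDefn}" you invoke at the delicate point does not exist; invertibility and non-increasing derivatives bound the shape of $u^{\pm}$, not the size of their slopes. Your route therefore proves only $\|\mathcal{T}_{\pi}Q^{1}_{cpt}-\mathcal{T}_{\pi}Q^{2}_{cpt}\|\leq K\gamma\|Q^{1}_{cpt}-Q^{2}_{cpt}\|$, a contraction only when $\gamma<1/K$, not the claimed modulus $\gamma$. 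Closing this requires an additional ingredient — e.g., explicitly assuming $u^{\pm}$ and $w^{\pm}$ are $1$-Lipschitz (which for $w^{\pm}$ forces the identity and collapses the probability distortion), or whatever extra structure the proof in \cite{ramasubramanian2021reinforcement} actually exploits — and cannot be extracted from the hypotheses as stated in the proposition.
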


The focus of this paper is to develop techniques to ensure that observation of $Q^\pi_{cpt}(s,a)$ for any $(s,a)$ does not provide any meaningful or distinguishing information between rewards $r(s,a)$ and $r'(s,a)$ as long as $||r-r'||_\infty \leq 1$.

\section{Privacy-Preserving CPT-Based RL}\label{Sec:PP-CPT-RL}

This section presents our main results that demonstrate that incorporating a privacy-preserving mechanism into a prospect-theoretic framework ensures that an agent can learn behaviors to enable it to assess and quantify risk, and ensure that its decision making is hidden from external parties. 
We first detail the design of a privacy-preserving CPT-based reinforcement learning (PP-CPT-RL) algorithm that will allow the agent to learn policies in a manner such that differential privacy of value functions is preserved. 
We then analyze the guarantees on privacy provided by PP-CPT-RL. 

\subsection{Algorithm}\label{CPTRLAlgos}
\begin{algorithm}[!h]
	\small
	\caption{CPT-PP-Estimation}
	\label{algo:CPT-PP-Estimation}
	\begin{algorithmic}[1]
		\REQUIRE{State $s$, action $a$, current policy $\pi$,  max. samples $N_{max}$}
		\STATE{\textbf{Initialize} $n = 1$; $X_{0}:=0$; $s_* \leftarrow s$}
		\REPEAT
		\STATE{Take action $a$, observe $r(s,a)$ and next state $s'$}
		\STATE{Determine $PrivPres_b(s')$ for each $b$ using Algorithm \ref{algo:PPLevel-Action}}
		\STATE{$X_n:=r(s, a) + \gamma \sum_b \pi(b|s') (Q_{cpt}^\theta(s', b)+PrivPres_b(s'))$}
		\IF{$X_n > X_{0}$}
		\STATE{$s_* \leftarrow s'$}
		\STATE{$X_0 \leftarrow X_n$}
		\ENDIF
		\STATE{$n \leftarrow n+1$}
		\UNTIL{$n>N_{max}$}
		\STATE{Arrange samples $\{X_i\}$ in ascending order: $X_{[1]} \leq X_{[2]} \leq \dots$}
		\STATE{Let:
		{\small
		\begin{align*}
		\rho_{cpt}^+:&=\sum_{i=1}^{N_{max}} u^+(X_{[i]})(w^+(\frac{N_{max}+i-1}{N_{max}}) \\
		&\qquad- w^+(\frac{N_{max}-i}{N_{max}}))\\
		\rho_{cpt}^-:&=\sum_{i=1}^{N_{max}} u^-(X_{[i]})(w^-(\frac{i}{N_{max}}) - w^-(\frac{i-1}{N_{max}}))
		\end{align*}
		}%
		}
		\STATE{$\rho_{cpt}(r(s,a)+\gamma \sum_b \pi(b|\cdot) (Q_{cpt}^\theta(\cdot, b)+PrivPres_b(\cdot))):= \rho_{cpt}^+ - \rho_{cpt}^-$}
		\RETURN{$\rho_{cpt}(\cdot), s_*$}
	\end{algorithmic}
\end{algorithm}
\begin{algorithm}[!h]
	\small
	\caption{PP-CPT-RL}
	\label{algo:CPT-PrivPres}
	\begin{algorithmic}[1]
		\REQUIRE{Learning rate $\alpha$; horizon $T_{max}$; discount $\gamma$; target privacy level $(\epsilon, \delta)$; batch size $B$}
		\STATE{\textbf{Initialize} Parameters $\theta$; value functions $Q_{cpt}^\theta(s,a)$, $T = 1$, linked list $PrivPres = []$}
		\STATE{Determine noise level $\sigma$ based on $(\epsilon, \delta)$ using Theorem \ref{ThmCPT-DiffPriv}}
		\REPEAT
		\STATE{Initialize $s \in S$, $\bar{b} = 1$, $BatchLoss_{cpt}^\theta = 0$}
		\REPEAT
		\STATE{Add $s$ to linked list $PrivPres$}
		\STATE{Determine $PrivPres_a(s)$ for each $a$ using Algorithm \ref{algo:PPLevel-Action}}
		\STATE{Choose $a$ from $\arg \max_a [Q_{cpt}^\theta(s,a)+PrivPres_a(s)]$}
		\STATE{Obtain $\rho_{cpt}(\cdot), s_*$ from Algorithm \ref{algo:CPT-PP-Estimation}; $Q_{Targ}^\theta:=\rho_{cpt}(\cdot)$}
		\STATE{$\min_{\theta} Loss_{cpt}^\theta:=0.5[Q_{Targ}^\theta - (Q_{cpt}^\theta(s,a)+PrivPres_a(s))]^2$}
		\STATE{$BatchLoss_{cpt}^\theta:=BatchLoss_{cpt}^\theta + Loss_{cpt}^\theta$}
		\STATE{$s \leftarrow s_*$}
		\STATE{$\bar{b} \leftarrow \bar{b}+1$}
		\UNTIL{$\bar{b}>B$}
		\STATE{Update parameters $\theta \leftarrow \theta -  \alpha \frac{1}{B}\nabla_\theta BatchLoss_{cpt}^\theta$}
		\STATE{$T \leftarrow T+1$}
		\UNTIL{$T>T_{max}$}
		\RETURN{$Q_{cpt}^\theta(s,a)$}
	\end{algorithmic}
\end{algorithm}
\begin{algorithm}[!h]
	\small
	\caption{PPLevel-Action}
	\label{algo:PPLevel-Action}
	\begin{algorithmic}[1]
		\REQUIRE{State $s$, noise level $\sigma$, list $PrivPres$}
		\FOR{each action $a$}
		\STATE{$\mu_a =K_{10}K_{00}^{-1}$}
		\STATE{$d_a = K_{11}-K_{10}K_{00}^{-1}K_{10}^T$}
		\STATE{$PrivPres_a(s) \sim \mathcal{N}(\mu_a, \sigma d_a)$}
		\ENDFOR
		\RETURN{$PrivPres(s):=[PrivPres_a(s)]_{a=1}^m$}
	\end{algorithmic}
\end{algorithm}

We present an algorithm based on temporal difference (TD) techniques for PP-CPT-RL. 
TD techniques seek to learn value functions using episodes of experience.
An experience episode comprises a sequence of states, actions, and rewards when following a policy $\pi$. 
The predicted values at any time-step is updated in a way to bring it closer to the prediction of the same quantity at the next time-step. 
In order to support environments with large, possibly continuous state spaces, our approach is based on \emph{deep Q-learning} \cite{mnih2015human}, where Q-values are parameterized by a deep neural network\footnote{This parameterization is analogous to the Q-table that is typically seen in Q-learning \cite{sutton2018reinforcement}. Updating entries of the Q-table is then equivalent to updating the values of the parameter $\theta$ using a gradient-based method.}. 

From Equations (\ref{CPTValue}) and (\ref{CPTValueDefn}), we observe that  $\rho_{cpt}$ is defined in terms of a weighting function applied to a cumulative probability distribution. 
We first use a technique proposed in \cite{jie2018stochastic} to estimate the CPT-value $\rho_{cpt}$ to use TD-methods.

\subsubsection{Calculating $\rho_{cpt}$ from samples}

Algorithm \ref{algo:CPT-PP-Estimation} is a procedure to obtain multiple samples of the random variable $r(s,a)+\gamma V_{cpt}(s')$. 
These samples are then used to estimate $\rho_{cpt}(r(s,a)+\gamma V_{cpt}(s'))$ (since $V_{cpt}(s) = \sum_a \pi(\cdot|s)Q_{cpt}(s)$). 
This way to estimate the CPT-value of a random variable was proposed in \cite{jie2018stochastic}, and was shown to be asymptotically consistent. 
In order to obtain these estimates in a privacy-preserving manner, we use Algorithm \ref{algo:PPLevel-Action} to determine the quantum of noise that needs to be added for each action at any state. 

\subsubsection{Privacy-preserving CPT-Q-Learning}

Algorithm \ref{algo:CPT-PrivPres} is a technique to learn policies for an RL-agent with a CPT-based objective in a privacy-preserving manner. 
The target privacy level $(\epsilon, \delta)$ determines the noise level that will be needed to ensure differential privacy (\emph{Line 2}). 
The linked-list $PrivPres$ maintains a record of states visited along sample trajectories in the replay buffer, since parameters of the Gaussian noise added to the value functions at each step will depend on the `chain' of states in this list (\emph{Line 7}). 
The training process consists of working with batches of sample trajectories that have been collected in a \emph{replay buffer}. 
A quadratic loss function that measures the TD-error with respect to a \emph{target $Q-$network} (whose parameters are kept fixed) is minimized in \emph{Line 10}. 
The parameters of the target network are updated after examining all the samples from a batch using a gradient-based method (\emph{Line 15}) \cite{mnih2015human}. 
\emph{Lines 10 and 15} together comprise the TD-update.

\subsubsection{Determining noise level for each action}

Algorithm \ref{algo:PPLevel-Action} is used to determine the quantum of noise that needs to be added for each action at state $s$. 
The noise is added using a Gaussian process mechanism. 
Following the notation established in Sec. \ref{GausProc}, and $I$ denoting the identity matrix, we define $\mu_a$ and $d_a$ (\emph{Lines 2-3}) as: 
\begin{align}
&K_{10}K_{00}^{-1}=\frac{exp(-\beta_n)}{1+exp(-2\beta_n)}\begin{bmatrix}
1&1&0&\dots&0&0\\0&1&1&\dots&0&0\\\vdots&\vdots&\vdots&\ddots&\vdots&\vdots\\0&0&0&\dots&1&0\\0&0&0&\dots&1&1
\end{bmatrix}\\
&K_{11}-K_{10}K_{00}^{-1}K_{10}^T=\frac{1-exp(-2\beta_n)}{1+exp(-2\beta_n)}I
\end{align}

\subsection{Privacy Analysis} 
Our objective is to ensure that observing or querying the value function in Algorithm \ref{algo:CPT-PrivPres} will not reveal useful information about rewards received by an agent. 
This is formally stated in Theorem \ref{ThmCPT-DiffPriv}, which establishes a guarantee on the privacy of Algorithm \ref{algo:CPT-PrivPres}.  
We make an assumption to ensure that the neural networks used to learn 
$Q^\theta_{cpt}(s,a)$ define a complete vector space of functions equipped with a norm that is a combination of $L^p-$norms of the functions along with some of its derivatives. 
Then, we state intermediate results which establish i) conditions for $(\epsilon, \delta)-$differential privacy of a mechanism $\mathbb{M}$ \cite{dwork2014algorithmic, hall2013differential, wang2019privacy}, 
ii) that sample paths of a Gaussian process generated on a reproducing kernel Hilbert space (RKHS) are bounded with high probability \cite{wang2019privacy},  
and iii) an expression for the RKHS norm \cite{wang2019privacy, lalley2013gaussian}.

\begin{assumption}
The neural network used to approximate $Q^\pi_{cpt}(s,a)$ in Algorithm \ref{algo:CPT-PrivPres} has a finite number of parameters, a finite number of layers, and the gradients of its activation functions are bounded. 
\end{assumption}
%
%
%

\begin{proposition}\label{PropDiffPriv}
Let $\Delta_{\mathbb{M}}$ be the sensitivity of a mechanism $\mathbb{M}$. The following statements hold:  
\begin{enumerate}
\item 
If $\epsilon \in (0,1)$ and $\sigma \geq \sqrt{2 \ln (1.25/\delta)}\Delta_{\mathbb{M}}/\epsilon$, then $\mathbb{M}(d)+y$ is $(\epsilon, \delta)-$differentially private, where $y \sim \mathcal{N}(0, \sigma^2 I)$. 
\item 
If $\epsilon \in (0,1)$ and $\sigma \geq \sqrt{2 \ln (1.25/\delta)}\Delta_{\mathbb{M}}/\epsilon$, then $\mathbb{M}(d)+g$ is $(\epsilon, \delta)-$differentially private, where $g \sim \mathcal{G}(0, \sigma^2 K)$, and $\mathcal{U}$ is an RKHS with kernel function $K$.
\end{enumerate}
\end{proposition}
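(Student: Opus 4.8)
The plan is to treat both statements as instances of the Gaussian mechanism — the first in $\mathbb{R}^n$, the second in an RKHS — and to prove the first directly before reducing the second to it. Both cited references (\cite{dwork2014algorithmic} for the finite-dimensional case, \cite{hall2013differential} for the functional case) establish exactly these bounds, so the argument amounts to reconstructing their privacy-loss analysis.

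For the first statement, I would fix two neighboring inputs $d, d'$, write $b := \mathbb{M}(d) - \mathbb{M}(d')$ so that $\|b\|_2 \le \Delta_{\mathbb{M}}$, and study the privacy-loss random variable
\begin{equation*}
L(z) := \ln \frac{f_{\mathbb{M}(d)}(z)}{f_{\mathbb{M}(d')}(z)},
\end{equation*}
where $f_\mu$ denotes the density of $\mathcal{N}(\mu, \sigma^2 I)$. Substituting $z = \mathbb{M}(d) + \sigma \xi$ with $\xi \sim \mathcal{N}(0,I)$, a direct expansion of the quadratic forms gives $L = \langle \xi, b \rangle / \sigma + \|b\|_2^2 / (2\sigma^2)$, so $L$ is itself Gaussian with mean $\|b\|_2^2 / (2\sigma^2)$ and variance $\|b\|_2^2 / \sigma^2$. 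By the standard privacy-loss characterization, it suffices to show $\mathbb{P}(L > \epsilon) \le \delta$; applying the Gaussian tail bound and solving the resulting inequality for $\sigma$ yields the stated threshold $\sigma \ge \sqrt{2 \ln(1.25/\delta)}\, \Delta_{\mathbb{M}} / \epsilon$, with the constant $1.25$ emerging from the bound on the tail integral.

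For the second statement I would reduce to the first via the Cameron--Martin theorem. The two candidate output laws are the Gaussian process measures $\mathcal{G}(\mathbb{M}(d), \sigma^2 K)$ and $\mathcal{G}(\mathbb{M}(d'), \sigma^2 K)$, differing only by the shift $b = \mathbb{M}(d) - \mathbb{M}(d')$. The Cameron--Martin space of $\mathcal{G}(0, \sigma^2 K)$ is precisely the RKHS associated with $K$, and when $b$ lies in this space the log Radon--Nikodym derivative between the two measures is again a real Gaussian whose mean and variance are $\|b\|_K^2 / (2\sigma^2)$ and $\|b\|_K^2 / \sigma^2$ — identical in form to the finite-dimensional case, but with the RKHS norm replacing $\|\cdot\|_2$. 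Since the sensitivity $\Delta_{\mathbb{M}}$ is measured in this RKHS norm, the same tail-bound computation closes the argument. Equivalently, one may verify $(\epsilon, \delta)$-privacy for every finite collection of function evaluations — each a multivariate Gaussian to which the first statement applies with the Gram-matrix norm — and pass to the limit.

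The main obstacle lies in the second part: making the infinite-dimensional density ratio rigorous. Unlike $\mathbb{R}^n$, two Gaussian process measures with different means need not be mutually absolutely continuous, so one must verify that the shift $b$ belongs to the Cameron--Martin space and that the sensitivity is quantified in the RKHS norm rather than a sup- or $L^2$-norm; otherwise the privacy-loss variable is ill-defined. The finite-projection route sidesteps the absolute-continuity subtleties but then demands care that the $(\epsilon, \delta)$ guarantee is \emph{uniform} across all finite projections, so that passing to the limit is valid.
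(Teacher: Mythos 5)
The paper itself contains no proof of this proposition: it is stated as an imported result, with part~1 cited to \cite{dwork2014algorithmic} and part~2 to \cite{hall2013differential} (see also \cite{wang2019privacy}). So the relevant comparison is against those sources, and your reconstruction is correct on both counts. Your part~1 is exactly the Dwork--Roth analysis: the privacy-loss variable $L = \langle \xi, b\rangle/\sigma + \|b\|_2^2/(2\sigma^2)$ is Gaussian with mean $\|b\|_2^2/(2\sigma^2)$ and variance $\|b\|_2^2/\sigma^2$, the sufficient condition $\mathbb{P}(L > \epsilon) \le \delta$ yields $(\epsilon,\delta)$-privacy, and the constant $1.25$ and the hypothesis $\epsilon \in (0,1)$ both arise in their tail computation (you take $\|b\|_2 = \Delta_{\mathbb{M}}$ as the worst case, which is justified since the tail probability is monotone in $\|b\|_2$ over the relevant range --- worth one sentence if you write this out).

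For part~2 your primary route differs from the cited proof: Hall et al.\ argue via finite-dimensional projections --- they show $(\epsilon,\delta)$-privacy with respect to the cylinder $\sigma$-field follows from privacy of every finite vector of evaluations, each an instance of part~1 in the Gram-matrix Mahalanobis norm, with the uniformity you flag supplied by the fact that $\sup_{x_1,\dots,x_n} f_n^T K_n^{-1} f_n = \|f\|_{\mathcal{H}}^2$, so every projection inherits the same $(\epsilon,\delta)$ from the RKHS-norm sensitivity bound. Your Cameron--Martin argument is a valid alternative and arguably cleaner: finite RKHS-norm sensitivity forces the shift $b$ into the Cameron--Martin space of $\mathcal{G}(0,\sigma^2 K)$, so the Feldman--H\'ajek dichotomy resolves in favor of equivalence of the two output laws, and the log Radon--Nikodym derivative is Gaussian with precisely the mean and variance from part~1 with $\|\cdot\|_2$ replaced by $\|\cdot\|_K$, letting the tail bound close the argument in one shot. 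The projection route buys freedom from infinite-dimensional measure theory at the cost of the uniformity bookkeeping; the Cameron--Martin route buys a single density-ratio computation at the cost of invoking the absolute-continuity machinery you correctly identify as the crux. No gap in either route; your attempt supplies a proof the paper deliberately omits.
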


\begin{lemma}
\label{LemmaGPTail}
Consider a sample path $f$ generated by a GP $\mathcal{G}(0, \sigma^2K)$ on an RKHS with kernel $K(x,y):=exp(-\beta||x-y||_1)$. Then, $f^*:=\max f(x)$ exists almost surely, and for any $u > 0$, $\mathbb{P}(f^* \geq 8.68 \sqrt{\beta}\sigma + u) \leq exp(-u^2/2)$. 
\end{lemma}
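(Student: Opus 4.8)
The plan is to split the claim into three pieces: (i) almost-sure existence of the maximum, (ii) a bound $\mathbb{E}[f^*] \le 8.68\sqrt{\beta}\sigma$ on the expected supremum, and (iii) a Gaussian concentration inequality around that mean. All three rest on classical Gaussian-process tools \cite{lalley2013gaussian} (Kolmogorov continuity, Dudley's entropy integral, and the Borell--TIS inequality), applied through the canonical pseudometric $d(x,y) := (\mathbb{E}|f(x)-f(y)|^2)^{1/2}$ of the process.

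For (i), I would first compute the increment variance. Since $f \sim \mathcal{G}(0,\sigma^2 K)$ is centered with $K(x,x)=1$, we have $\mathbb{E}|f(x)-f(y)|^2 = 2\sigma^2\big(1 - e^{-\beta\|x-y\|_1}\big) \le 2\sigma^2\beta\|x-y\|_1$, using $1-e^{-t}\le t$. Because the increments are jointly Gaussian, all their moments are controlled by this variance: $\mathbb{E}|f(x)-f(y)|^{2p} = C_p\big(\mathbb{E}|f(x)-f(y)|^2\big)^p \le C_p(2\sigma^2\beta)^p\|x-y\|_1^p$. Choosing $p$ large enough relative to the dimension of the (compact) index set and invoking Kolmogorov's continuity theorem yields a version of $f$ with almost surely continuous (indeed H\"older) sample paths; a continuous function on a compact set attains its maximum, so $f^* = \max_x f(x)$ exists almost surely.

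For (ii), I would apply Dudley's entropy bound $\mathbb{E}[f^*] \le C\int_0^{D}\sqrt{\log N(\epsilon,d)}\,d\epsilon$, where $D$ is the $d$-diameter of the index set and $N(\epsilon,d)$ its $\epsilon$-covering number under $d$. Inverting the increment-variance estimate shows that a $d$-ball of radius $\epsilon$ corresponds to a Euclidean ball of radius $\approx \epsilon^2/(2\beta\sigma^2)$, so $N(\epsilon,d)$ scales like $\beta\sigma^2/\epsilon^2$ and $D \le \sigma\sqrt{2}$. Substituting these into the entropy integral and evaluating it explicitly produces a bound of the form $c\sqrt{\beta}\sigma$. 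The delicate part, and the main obstacle, is tracking the absolute constants in the covering-number estimate and the integral so that $c$ comes out exactly to the stated $8.68$: the scaling $\sqrt{\beta}\sigma$ is routine, but pinning down $8.68$ requires the non-asymptotic form of Dudley's inequality together with a sharp covering bound for this specific kernel.

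Finally, for (iii), I would invoke the Borell--TIS inequality: for a centered Gaussian process with bounded sample paths and maximal pointwise variance $\sigma_{\max}^2 = \sup_x \mathrm{Var}(f(x)) = \sigma^2 K(x,x) = \sigma^2$, one has $\mathbb{P}(f^* - \mathbb{E}[f^*] \ge u) \le \exp(-u^2/(2\sigma_{\max}^2))$. Combining this with the bound from (ii) and the monotonicity of the event in the threshold gives $\mathbb{P}(f^* \ge 8.68\sqrt{\beta}\sigma + u) \le \mathbb{P}(f^* \ge \mathbb{E}[f^*] + u) \le \exp(-u^2/(2\sigma^2))$, which is the claimed tail (the stated exponent $-u^2/2$ corresponds to measuring the deviation $u$ in units of $\sigma$, i.e.\ standardizing the unit-variance marginals). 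I would close by noting that the finiteness of $\mathbb{E}[f^*]$ required for Borell--TIS is itself supplied by the convergence of the Dudley integral established in (ii).
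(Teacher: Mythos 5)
The paper never proves this lemma: it is imported as a quoted intermediate result from \cite{wang2019privacy} (with the chaining machinery credited to \cite{lalley2013gaussian}), so there is no internal proof to diverge from. Your three-step outline --- Kolmogorov continuity for almost-sure existence of the maximum, Dudley's entropy integral for $\mathbb{E}[f^*]\leq c\sqrt{\beta}\sigma$, Borell--TIS for the concentration around the mean --- is precisely the route taken in that cited source, and the individual estimates all check out: the increment bound $\mathbb{E}|f(x)-f(y)|^2 \leq 2\sigma^2\beta\|x-y\|_1$, the inversion giving covering numbers of order $\beta\sigma^2/\epsilon^2$ on a compact one-dimensional index set, the $d$-diameter bound $\sqrt{2}\sigma$, and the maximal pointwise variance $\sigma^2$. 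Leaving the constant $8.68$ as ``constant bookkeeping'' is a fair assessment rather than a missing idea: in the source it arises exactly as you anticipate, from a non-asymptotic form of Dudley's bound with explicit constants evaluated for this kernel, with a coarsening of the natural $\sigma\sqrt{\log\beta}$ rate up to $\sigma\sqrt{\beta}$.

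The one place you should be firmer rather than conciliatory is the exponent. Borell--TIS gives $\exp(-u^2/(2\sigma^2))$, as you derive, and your ``standardization'' gloss does not convert this into the printed $\exp(-u^2/2)$: measuring the deviation in units of $\sigma$ replaces the threshold by $8.68\sqrt{\beta}\sigma + u\sigma$, which is not the threshold in the lemma. As transcribed in this paper, the statement is in fact false for $\sigma > 1$: take $\beta = 0.01$, $\sigma = 100$, $u = 10$; then already the one-point marginal $f(x_0)\sim\mathcal{N}(0,\sigma^2)$ gives $\mathbb{P}(f^*\geq 8.68\sqrt{\beta}\sigma+u) \geq \mathbb{P}\big(Z \geq 8.68\sqrt{\beta}+u/\sigma\big) \approx 0.17$ for standard normal $Z$, whereas the claimed bound is $e^{-50}$. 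The printed form is valid only when $\sigma \leq 1$, since then $\exp(-u^2/(2\sigma^2)) \leq \exp(-u^2/2)$; your version with $\sigma^2$ in the exponent is the correct one. Note that this is not an idle quibble: the substitution $u = 2k - 8.68\sqrt{\beta}\sigma$ in the proof of Theorem \ref{ThmCPT-DiffPriv} inherits the same exponent, so the failure probability $\exp(-(2k-8.68\sqrt{\beta}\sigma)^2/2)$ claimed there is understated whenever $\sigma>1$ (e.g., the $\sigma=5$ setting used in the experiments).
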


\begin{lemma}
\label{LemmaRKHSNorm}
The RKHS associated with kernel $K(x,y):=exp(-\beta||x-y||_1)$ consists of all continuous functions $\phi$ with finite $L^2-$norm whose derivatives $\phi'$ also have finite $L^2-$norm. 
Further, the RKHS norm $||\phi||^2_{\mathcal{H}} \leq (1+\beta/2)(\phi(x))^2+L^2/2\beta$, where $L$ is the Lipschitz constant.
\end{lemma}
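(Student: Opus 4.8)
The plan is to compute the RKHS of the exponential (Ornstein–Uhlenbeck) kernel explicitly and then bound its norm term by term. The key observation is that, up to a constant, $K(x,y)=\exp(-\beta|x-y|)$ is the Green's function of the operator $\beta^2-\partial_x^2$: a direct distributional computation gives $(\beta^2-\partial_x^2)\exp(-\beta|x-y|)=2\beta\,\delta(x-y)$, where the $\delta$ arises from the jump of $\partial_x\exp(-\beta|x-y|)$ at $x=y$. Inverting this operator identifies $\mathcal{H}$ with the space of continuous $\phi$ on the sampling interval $[0,1]$ for which both $\phi$ and $\phi'$ lie in $L^2$, which is precisely the characterization asserted in the first sentence of the lemma. (Equivalently one can argue spectrally: the spectral density of $K$ is proportional to $(\beta^2+\omega^2)^{-1}$, so its reciprocal is a degree-two polynomial in $\omega$, and Plancherel converts the reciprocal spectral weight into the $L^2$ norms of $\phi$ and of $\phi'$.)

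Second, I would pin down the explicit inner product. Guided by the operator identity, I claim
\[
\langle f,g\rangle_{\mathcal H}=\frac{1}{2\beta}\int_0^1\big(f'g'+\beta^2 fg\big)\,dt+\tfrac12\big(f(0)g(0)+f(1)g(1)\big),
\]
and I would verify correctness by checking the reproducing property $\langle K(\cdot,y),\phi\rangle_{\mathcal H}=\phi(y)$ for $y\in(0,1)$. Writing $k_y(x)=\exp(-\beta|x-y|)$ and integrating $\int_0^1 k_y'\phi'\,dt$ by parts, the interior contribution equals $-\beta^2\int_0^1 k_y\phi\,dt+2\beta\,\phi(y)$ (the $2\beta\,\phi(y)$ coming from the delta) plus the boundary terms $-\beta\,k_y(0)\phi(0)-\beta\,k_y(1)\phi(1)$. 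After dividing by $2\beta$, adding back $\tfrac{\beta}{2}\int_0^1 k_y\phi\,dt$, and cancelling against the explicit boundary term $\tfrac12(k_y(0)\phi(0)+k_y(1)\phi(1))$, everything collapses to $\phi(y)$. This fixes the norm as
\[
\|\phi\|_{\mathcal H}^2=\frac{1}{2\beta}\|\phi'\|_{L^2}^2+\frac{\beta}{2}\|\phi\|_{L^2}^2+\tfrac12\big(\phi(0)^2+\phi(1)^2\big).
\]

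Finally, I would bound the three terms over the unit interval, reading $(\phi(x))^2$ in the statement as $\sup_x\phi(x)^2$. Lipschitz continuity gives $|\phi'|\le L$, so $\tfrac{1}{2\beta}\|\phi'\|_{L^2}^2\le \tfrac{L^2}{2\beta}$; boundedness gives $\tfrac{\beta}{2}\|\phi\|_{L^2}^2\le \tfrac{\beta}{2}\sup_x\phi(x)^2$ and $\tfrac12(\phi(0)^2+\phi(1)^2)\le \sup_x\phi(x)^2$. Summing the latter two yields the coefficient $1+\beta/2$ and produces the stated bound. The main obstacle is the middle step: getting the boundary terms in the inner product exactly right, since the distributional second derivative of $k_y$ and the finite-interval integration by parts must be handled with care—an error there shifts the constant multiplying $(\phi(x))^2$ and destroys the clean $(1+\beta/2)$ coefficient. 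The identification of $\mathcal H$ with $H^1$ and the termwise bounds are then routine.
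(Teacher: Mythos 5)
Your proof is correct, and it is worth noting that the paper contains no proof of this lemma at all: the statement is imported from \cite{wang2019privacy} and \cite{lalley2013gaussian}, where it ultimately rests on Parzen's classical identification of the RKHS of the Ornstein--Uhlenbeck kernel. Your Green's-function derivation reconstructs exactly that classical computation from scratch. The key steps all check out: the distributional identity $(\beta^2-\partial_x^2)e^{-\beta|x-y|}=2\beta\,\delta(x-y)$ (the jump of $k_y'$ at $x=y$ is $-2\beta$); the candidate inner product on $[0,1]$ with the boundary correction $\tfrac12\big(f(0)g(0)+f(1)g(1)\big)$; and the integration-by-parts verification of the reproducing property, where your boundary bookkeeping is right because $k_y'(1)=-\beta k_y(1)$ and $k_y'(0)=\beta k_y(0)$, so the boundary terms from integration by parts cancel the explicit $\tfrac12$-terms and leave exactly $\phi(y)$. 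One conceptual remark: the reproducing-property check is not merely a sanity check on a guessed formula --- by Moore--Aronszajn uniqueness, exhibiting a Hilbert space of functions that contains every section $k_y$ and reproduces point evaluations \emph{identifies} it as the RKHS of $K$, so this step alone justifies the first sentence of the lemma without any appeal to the spectral argument (which is a fine alternative on $\mathbb{R}$ but does not by itself produce the boundary terms on $[0,1]$). The termwise bound then gives $\|\phi\|_{\mathcal H}^2\le \tfrac{L^2}{2\beta}+\tfrac{\beta}{2}\sup_x\phi(x)^2+\sup_x\phi(x)^2$, which is the stated inequality. Your decision to read $(\phi(x))^2$ as $\sup_x\phi(x)^2$ is the only reading under which the lemma is true (for a fixed $x$ with $\phi(x)=0$ the claim would fail), and it is also how the bound is consumed in the proof of Theorem \ref{ThmCPT-DiffPriv}, where $\phi=Q_{cpt}-Q'_{cpt}$ is bounded uniformly by $\tfrac{\alpha L}{B}(4k+2c_{\max}+1)$.

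Two small points to tighten. First, verify the reproducing property also at the endpoints $y\in\{0,1\}$; it holds, with the one-sided derivative of $k_y$ entering the boundary term, but your writeup restricts to $y\in(0,1)$. Second, state explicitly that a Lipschitz $\phi$ is absolutely continuous, so $\phi'$ exists almost everywhere with $|\phi'|\le L$; this is what licenses $\tfrac{1}{2\beta}\int_0^1(\phi')^2\,dt\le \tfrac{L^2}{2\beta}$ and places $\phi$ in the Sobolev space you identified. Neither gap is substantive.
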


We are now ready to establish our main result that establishes a privacy guarantee on value functions learned using Algorithm \ref{algo:CPT-PrivPres} when rewards are sufficiently close to each other using differential privacy. 

\begin{theorem}\label{ThmCPT-DiffPriv}
Let the functions $u^+, u^-$ in Definition \ref{CPTValueDefn} be invertible and differentiable with monotonically non-increasing derivatives. 
Let $2k > 8.68\sqrt{\beta}\sigma$, $L$ be the Lipschitz constant of the value function approximation yielded by the neural network parameterized by $\theta$, $B$ be the batch size, 
and $\alpha$ be the learning rate of Algorithm \ref{algo:CPT-PrivPres}. 
Assume that $\sigma \geq \sqrt{2 \ln (1.25/\delta)}(L^2(1/\beta^2 + 1/\beta))/\epsilon$ and $\beta = \Big[\frac{\alpha}{B}(4k+2c_{\max}+1)\Big]^{-1}$, where $c_{\max} \geq (r+\gamma Q_{cpt} (s_*)-Q_{cpt}(s))$. 

Then, each iteration of Algorithm \ref{algo:CPT-PrivPres} is $(\epsilon, \delta +  exp(-(2k-8.68\sqrt{\beta}\sigma)^2/2)-$ differentially private with respect to neighboring reward functions $r, r'$ such that $||r-r'||_\infty \leq 1$. 
\end{theorem}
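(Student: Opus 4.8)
The plan is to view a single iteration of Algorithm \ref{algo:CPT-PrivPres} as a randomized mechanism $\mathbb{M}$ that maps the reward function $r$ to the updated value-function approximation $Q^\theta_{cpt}$, and to show that this is precisely the Gaussian process mechanism of Proposition \ref{PropDiffPriv}(2) applied to a map whose RKHS sensitivity is controlled. Since the noise $PrivPres_a(s) \sim \mathcal{N}(\mu_a, \sigma d_a)$ injected by Algorithm \ref{algo:PPLevel-Action} is exactly a sample from $\mathcal{G}(0, \sigma^2 K)$ restricted to the chain of visited states recorded in $PrivPres$, the task reduces to (i) bounding how much the noiseless output changes in the RKHS norm when $r$ is replaced by a neighboring $r'$ with $\|r - r'\|_\infty \le 1$, and (ii) reconciling the $\sigma$-condition of Proposition \ref{PropDiffPriv} with the stated hypotheses on $\sigma$ and $\beta$.

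First I would compute the sensitivity of one TD update. A unit perturbation of $r$ changes each sample $X_n = r(s,a) + \gamma \sum_b \pi(b|s')(Q^\theta_{cpt}(s',b) + PrivPres_b(s'))$ in Algorithm \ref{algo:CPT-PP-Estimation} by at most $1$; using that $u^+, u^-$ are differentiable with monotone (hence bounded) derivatives on the relevant range and that $w^+, w^-$ are Lipschitz by Definition \ref{CPTValueDefn}, this propagates to a bounded change in the target $Q_{Targ} = \rho_{cpt}(\cdot)$. I would then push this change through the quadratic-loss gradient step of \emph{Lines 10 and 15}: the gradient is the TD-error times $\nabla_\theta Q^\theta_{cpt}$, the TD-error is bounded in magnitude by $c_{\max}$, the value functions by $2k$, and each update scales by $\alpha/B$. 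Collecting these factors yields a pointwise sensitivity of the form $\frac{\alpha}{B}(4k + 2c_{\max} + 1)$, which is exactly $1/\beta$; this identity is what motivates the choice $\beta = [\frac{\alpha}{B}(4k + 2c_{\max}+1)]^{-1}$.

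Next I would pass from the pointwise bound to the RKHS norm. Applying Lemma \ref{LemmaRKHSNorm} to $\phi := \mathbb{M}(r) - \mathbb{M}(r')$ gives $\|\phi\|^2_{\mathcal{H}} \le (1 + \beta/2)(\phi(x))^2 + L^2/2\beta$; substituting the pointwise bound $|\phi(x)| \lesssim 1/\beta$ together with the Lipschitz constant $L$ and simplifying produces an RKHS sensitivity bound of the form $L^2(1/\beta^2 + 1/\beta)$, matching the assumed $\sigma \ge \sqrt{2\ln(1.25/\delta)}(L^2(1/\beta^2 + 1/\beta))/\epsilon$. At this point Proposition \ref{PropDiffPriv}(2) delivers $(\epsilon, \delta)$-differential privacy, \emph{provided} the range $[-2k, 2k]$ used to derive $c_{\max}$ and the pointwise estimate actually contains the perturbed value functions, i.e. provided the injected GP sample path stays bounded by $2k$.

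Finally I would account for the event that this boundedness fails. By Lemma \ref{LemmaGPTail} with $u = 2k - 8.68\sqrt{\beta}\sigma > 0$ (guaranteed by the hypothesis $2k > 8.68\sqrt{\beta}\sigma$), the sample maximum exceeds $2k$ with probability at most $\exp(-(2k - 8.68\sqrt{\beta}\sigma)^2/2)$. Conditioning the Gaussian-mechanism guarantee on the complementary good event and absorbing this failure probability additively into $\delta$ yields the claimed $(\epsilon, \delta + \exp(-(2k - 8.68\sqrt{\beta}\sigma)^2/2))$-differential privacy for each iteration. I expect the main obstacle to be the second step, the sensitivity computation, since it must simultaneously track a unit reward change through the nonlinear, sample-based CPT estimator $\rho_{cpt}$ and through the neural-network gradient update, and must land exactly on the factor $1/\beta$ that renders the $\sigma$- and $\beta$-hypotheses consistent; the Lipschitz assumptions on $u^\pm, w^\pm$ and the monotone-derivative condition are precisely what make this propagation controllable.
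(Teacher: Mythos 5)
Your proposal is correct and follows essentially the same route as the paper's proof: bound the one-step change of the learned value function via the $\frac{\alpha L}{B}(4k+2c_{\max}+1)$ sensitivity estimate (triangle inequality through the pre-update $Q^{old}_{cpt}$, with the TD-error bounded by $c_{\max}$, the reward perturbation by $1$, and the noise-difference by $2k$ via Lemma \ref{LemmaGPTail} with $u = 2k - 8.68\sqrt{\beta}\sigma$), convert to the RKHS norm via Lemma \ref{LemmaRKHSNorm} with the stated choice of $\beta$ yielding $\|\phi\|^2_{\mathcal{H}} \leq L^2(1/\beta^2 + 1/\beta)$, invoke Proposition \ref{PropDiffPriv}(2), and absorb the tail-failure probability additively into $\delta$. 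Your additional sketch of propagating the unit reward change through the sample-based CPT estimator of Algorithm \ref{algo:CPT-PP-Estimation} is more explicit than the paper, which works directly with the TD target, but it does not alter the argument.
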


\begin{proof} 
Let $Q_{cpt}$ and $Q'_{cpt}$ be the state-action value functions learned using Algorithm \ref{algo:CPT-PrivPres} corresponding to rewards $r$ and $r'$ respectively. 
Consider the update-step in \emph{Line 15} of Algorithm \ref{algo:CPT-PrivPres}. 
If $Q^{old}_{cpt}$ denotes the (fixed) value function before the update and $L$ is the Lipschitz constant of the value function approximation, we have: 
\begin{align*}
||Q_{cpt} - Q^{old}_{cpt}||&\leq \frac{\alpha L}{B}||r+\gamma Q^{old}_{cpt} (s_*)-Q^{old}_{cpt}(s) \\&\qquad\quad+ PrivPres_a(s_*)-PrivPres_a(s)||, \\
||Q'_{cpt} - Q^{old}_{cpt}||&\leq \frac{\alpha L}{B}||r'+\gamma Q^{old}_{cpt} (s_*)-Q^{old}_{cpt}(s) \\&\qquad\quad+ PrivPres_a(s_*)-PrivPres_a(s)||.
\end{align*}
Set $u = 2k-8.68\sqrt{\beta}\sigma$ in Lemma \ref{LemmaGPTail}. 
Then $||PrivPres_a(s_*) - PrivPres_a(s)|| \leq 2k$ with probability $1-exp(-(2k-8.68\sqrt{\beta}\sigma)^2/2)$. 
When $||r-r'||_{\infty} \leq 1$, and $(r+\gamma Q^{old}_{cpt} (s_*)-Q^{old}_{cpt}(s)) \leq c_{\max}$, we have: 
\begin{align*}
||Q_{cpt} - Q'_{cpt}|| &\leq ||Q_{cpt} - Q^{old}_{cpt}|| + ||Q'_{cpt} - Q^{old}_{cpt}||\\
&\leq \frac{\alpha L}{B}(4k+2c_{\max}+1)
\end{align*}
with probability $1-exp(-(2k-8.68\sqrt{\beta}\sigma)^2/2)$. 

Defining $\phi: = Q_{cpt} - Q'_{cpt}$, from Lemma \ref{LemmaRKHSNorm}, we have:
\begin{align*}
||\phi||^2_{\mathcal{H}} &\leq (1+\frac{\beta}{2})\Big[\frac{\alpha L}{B}(4k+2c_{\max}+1)\Big]^2+\frac{L^2}{2\beta}
\end{align*}
Choose $\beta = \Big[\frac{\alpha}{B}(4k+2c_{\max}+1)\Big]^{-1}$. 
Then $||\phi||^2_{\mathcal{H}}  \leq L^2(1/\beta^2 + 1/\beta)$. 

Using Proposition \ref{PropDiffPriv}, if $\sigma \geq \sqrt{2 \ln (1.25/\delta)}||\phi||_{\mathcal{H}}/\epsilon$, then adding $PrivPres(s) \sim \mathcal{G}(0, \sigma^2K)$ to the value function $Q_{cpt}$ ensures that each iteration of Algorithm \ref{algo:CPT-PrivPres} is $(\epsilon,\delta +  exp(-(2k-8.68\sqrt{\beta}\sigma)^2/2)-$differentially private whenever $||r-r'||_\infty \leq 1$.  
\end{proof}

\section{Experimental Evaluation}\label{Sec:Expt}

This section presents an evaluation of the PP-CPT-RL algorithm. 
We compare behaviors learned when the agent adds different amounts of a calibrated noise to its value function in order to maintain privacy of its decision making with a baseline when this noise is not added. 
\begin{table*}[]
    \centering
    \begin{tabular}{|c|c|c|c|c|c|c|c|c|c|}
\hline
 &  & \multicolumn{4}{c|}{CPT} &   \multicolumn{4}{c|}{No CPT}\\  \cline{3-10}
   &  method  & obs1  & obs2  & obs3 & obs4  & obs1 & obs2& obs3 & obs4   \\  \hline \hline
 \multirow{3}{*}{\rotatebox[origin=c]{90}{Max Q}} &   DP-5 & 3.74& 0.74 &0.02&  0 & 7.995 & 0.63 & 0.06 & 0.08 \\    \cline{2-10}
    & DP-1&1.31& 2.63 &0.85&0.2 &4.14  & 1.49 & 0.165 & 2.375 \\      \cline{2-10}
    & No DP &1.125&  4.26 & 0.055& 0.81 &4.57& 2.59&  0.055 &  0.05  \\    \hline\hline
  
    \multirow{3}{*}{\rotatebox[origin=c]{90}{Rand Q}}   &DP-5  &7.58  &0.115& 0.365& 0.165& 8.595& 0.705 &0.28 & 0.215   \\     \cline{2-10}
     &DP-1 &1.595& 1.825& 0.575& 0.815& 3.615 & 1.56 & 0.525 &1.045  \\     \cline{2-10}
    & No DP& 0.54 &3.49 & 1.695 & 3.605& 2.415  &4.43 & 1.47 &3.385 \\      \hline
       
       \hline
    \end{tabular}
    \caption{Average number of visits to an obstacle for different privacy levels. 
    We compare cases when the agent chooses an action according to the highest Q-value ($Max. Q$) and when an action is chosen at random ($Rand Q$). 
    The agent visits an obstacle fewer times when maximizing a CPT-based reward ($CPT$) than an expected reward ($No$ $CPT$). 
    When penalties for $\{obs1, obs2, obs3, obs4\}$ are $\{50, 25, 10, 5\}$, that CPT-based objective is more effective in avoiding visits to obstacles with higher penalties. 
    The number of visits to an obstacle is also higher for a higher privacy level.}
    \label{tab:CPTresults}
\end{table*}
%

\begin{figure}
    \centering
    \includegraphics[scale=0.30]{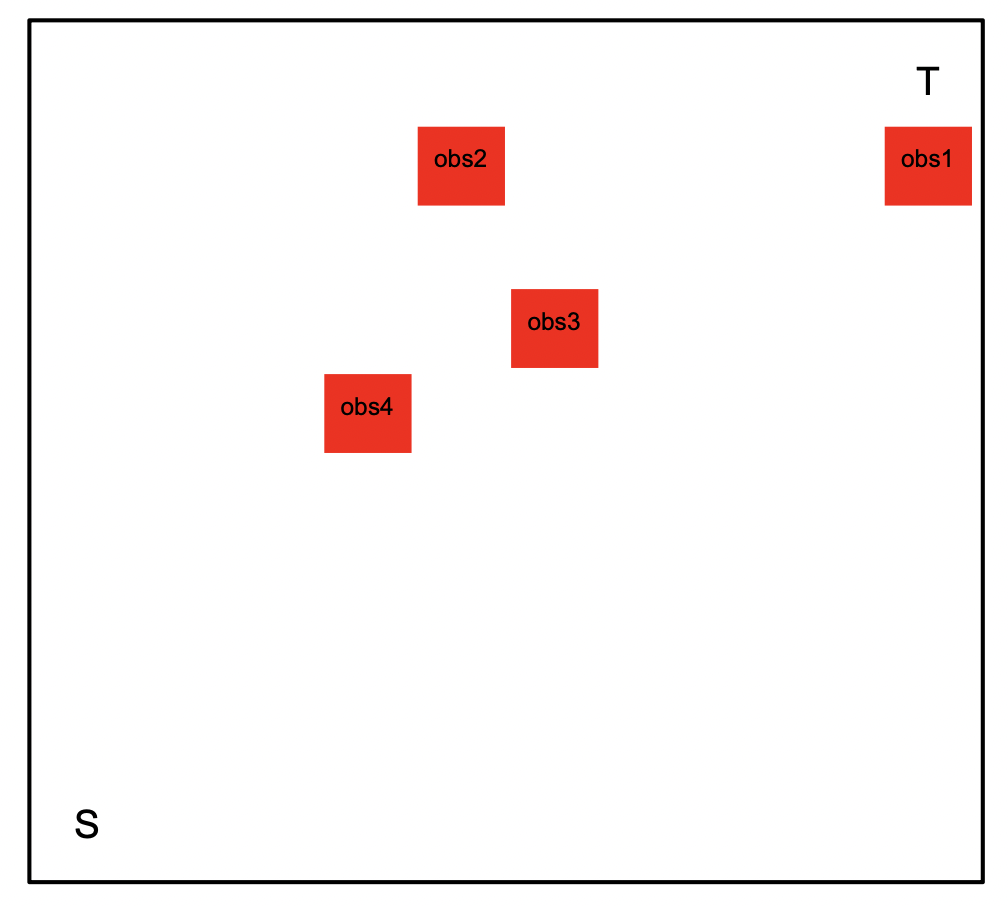}
    \caption{The $10 \times 10$ square region on which the PP-CPT-RL algorithm is evaluated. The agent needs to learn a policy to reach target $T$ from start $S$. There are obstacles in the environment (red squares), and the agent incurs a different cost when encountering each obstacle.}
    \label{fig:grid}
\end{figure}

We assume that the agent starts from the state `S' at the bottom left corner, and the target state `T' is at the top right corner of a $10 \times 10$ square region shown in Fig. \ref{fig:grid}. 
At each state, the agent can take one of four possible actions, $\{left,right,up,down\}$. 
If an action is allowed at a state then the transition to the intended next state happens with probability $0.9$ and with a probability of $0.1$ to another state. 
Suppose the current state of the agent is the position $(x,y)$, and the agent takes the action $right$. 
Then intended next state is determined as $(x', y')$ such that $Int(x') = Int(x)+1$ and $Int(y') = Int(y)$, where $Int(\cdot)$ denotes the integer part of the argument. 
The intended next state for other actions is similarly determined. 
If an action is not available in a state (e.g., \emph{down} at the Start) the agent remains in that state. 
The agent will have to avoid obstacles in order to reach the target. 
The discount factor $\gamma$ is set to $0.9$, and the utility and weighting functions are chosen as: 
\begin{align*}
&u^+(x) = |x|^{0.88}; \quad \omega^+(\kappa) = \frac{\kappa^{0.61}}{(\kappa^{0.61}+(1-\kappa)^{0.61})^{\frac{1}{0.61}}};\\
&u^-(x)=|x|^{0.88};\quad \omega^-(\kappa) = \frac{\kappa^{0.69}}{(\kappa^{0.69}+(1-\kappa)^{0.69})^{\frac{1}{0.69}}}.
\end{align*}
%
%
\begin{figure}
    \centering
    \includegraphics[scale=0.5]{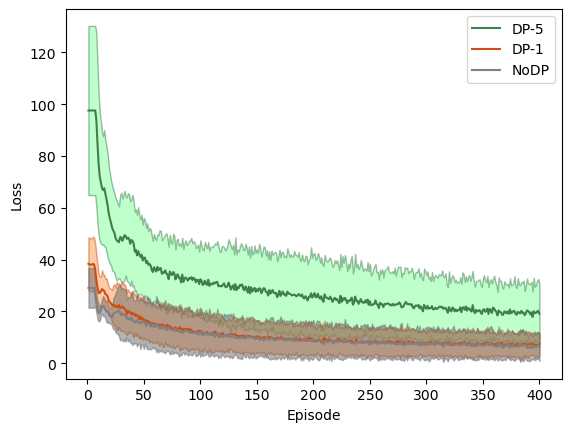}
    \caption{Loss values (\emph{Lines 10-11} of Algorithm \ref{algo:CPT-PrivPres}), averaged over 20 runs. Shaded regions indicate variance of the loss. When $\sigma = 1$ (red curve, denoted $DP-1$), the average loss and its variance is comparable to the setting without differential privacy (blue curve, denoted $NoDP$). Choosing $\sigma = 5$ (green curve, denoted $DP-5$) provides improved guarantees on privacy according to Theorem \ref{ThmCPT-DiffPriv}, but results in a higher loss, thus demonstrating a \emph{privacy-utility tradeoff}.} 
        \label{fig:loss}
\end{figure}

Figure \ref{fig:loss} compares the performance of Algorithm \ref{algo:CPT-PrivPres} for different privacy levels. 
We observe that choosing $\sigma = 1$ in \emph{Line 2} results in a performance that is as good as without differential privacy (i.e., no noise added to the value function), since the loss values and is variance is almost identical in both cases. 
A higher value of $\sigma = 5$, while providing improved guarantees on privacy, also results in a higher magnitude of loss. 
This demonstrates that there is an inherent trade-off between privacy and utility. 

In order to reason about agent behavior when learning policies using Algorithm \ref{algo:CPT-PrivPres}, we assume that each obstacle has a different penalty (to represent the relative severity of encountering the obstacle). 
Table \ref{tab:CPTresults} compares the number of visits to the obstacle regions. 
We compare cases when the agent chooses an action according to the highest Q-value ($Max. Q$) and when an action is chosen proportional to its probability ($Rand Q$). 
We make the following observations: 
\begin{itemize}
\item the agent visits an obstacle fewer times when maximizing a CPT-based objective than when maximizing an expected utility ($No$ $CPT$); 
\item the CPT-based objective is more effective in avoiding visits to obstacles with highest penalties; 
\item the number of visits to an obstacle is higher for a higher privacy level- this is consistent with the higher loss values seen in Fig. \ref{fig:loss}. 
\end{itemize}
These results are consistent with intuition, and will be aligned with that of a human user placed in the same environment. 
Developing a mathematical characterization of these properties is a promising direction of future research. 

\section{Conclusion}\label{Sec:Conclusion}

This paper presented a way to enable a reinforcement learning (RL) agent to learn behaviors that are consistent with human traits of assessing risk and a desire to keep decision making private. 
We used cumulative prospect theory (CPT) to quantify risk-sensitive behavior, and differential privacy to characterize privacy-preserving decision making. 
We designed an algorithm, PP-CPT-RL, to enable the agent to learn policies to maximize a CPT-based objective. 
Through adding a calibrated noise to CPT-based value functions we established guarantees on privacy when rewards were sufficiently close to each other. 
Experimental evaluation of PP-CPT-RL showed that agents can learn optimal policies in a privacy-preserving manner. 
Our experiments also revealed a privacy-utility tradeoff, and showed that agent behavior is consistent with a human placed in the same environment. 

Future work will seek to develop mathematically rigorous characterizations of the privacy-utility trade-off, establish privacy guarantees for other classes of RL algorithms (e.g., actor-critic), and examine extensions to the multi-agent case. 
%
\bibliographystyle{IEEEtran}
\bibliography{CDC22-RLPriv-References}

\begin{thebibliography}{10}
\providecommand{\url}[1]{#1}
\csname url@samestyle\endcsname
\providecommand{\newblock}{\relax}
\providecommand{\bibinfo}[2]{#2}
\providecommand{\BIBentrySTDinterwordspacing}{\spaceskip=0pt\relax}
\providecommand{\BIBentryALTinterwordstretchfactor}{4}
\providecommand{\BIBentryALTinterwordspacing}{\spaceskip=\fontdimen2\font plus
\BIBentryALTinterwordstretchfactor\fontdimen3\font minus
  \fontdimen4\font\relax}
\providecommand{\BIBforeignlanguage}[2]{{%
\expandafter\ifx\csname l@#1\endcsname\relax
\typeout{** WARNING: IEEEtran.bst: No hyphenation pattern has been}%
\typeout{** loaded for the language `#1'. Using the pattern for}%
\typeout{** the default language instead.}%
\else
\language=\csname l@#1\endcsname
\fi
#2}}
\providecommand{\BIBdecl}{\relax}
\BIBdecl

\bibitem{sutton2018reinforcement}
R.~S. Sutton and A.~G. Barto, \emph{Reinforcement Learning: {A}n
  Introduction}.\hskip 1em plus 0.5em minus 0.4em\relax MIT Press, 2018.

\bibitem{bertsekas2017dynamic}
D.~P. Bertsekas, \emph{Dynamic {P}rogramming and {O}ptimal {C}ontrol, {V}ol. 1,
  4th Ed.}\hskip 1em plus 0.5em minus 0.4em\relax Athena Scientific, 2017.

\bibitem{puterman2014markov}
M.~L. Puterman, \emph{Markov decision processes: {D}iscrete stochastic dynamic
  programming}.\hskip 1em plus 0.5em minus 0.4em\relax John Wiley \& Sons,
  2014.

\bibitem{hafner2011reinforcement}
R.~Hafner and M.~Riedmiller, ``Reinforcement learning in feedback control,''
  \emph{Machine Learning}, vol.~84, pp. 137--169, 2011.

\bibitem{mnih2015human}
V.~Mnih \emph{et~al.}, ``Human-level control through deep reinforcement
  learning,'' \emph{Nature}, vol. 518, no. 7540, 2015.

\bibitem{silver2016mastering}
D.~Silver \emph{et~al.}, ``Mastering the game of {G}o with deep neural networks
  and tree search,'' \emph{Nature}, vol. 529, no. 7587, 2016.

\bibitem{zhang2019deep}
C.~Zhang, P.~Patras, and H.~Haddadi, ``Deep learning in mobile and wireless
  networking: {A} survey,'' \emph{IEEE Communications Surveys \& Tutorials},
  vol.~21, no.~3, pp. 2224--2287, 2019.

\bibitem{sadigh2016planning}
D.~Sadigh, S.~Sastry, S.~A. Seshia, and A.~D. Dragan, ``Planning for autonomous
  cars that leverage effects on human actions.'' in \emph{Robotics: Science and
  Systems}, 2016.

\bibitem{yan2018data}
Z.~Yan and Y.~Xu, ``Data-driven load frequency control for stochastic power
  systems: {A} deep reinforcement learning method with continuous action
  search,'' \emph{IEEE Transactions on Power Systems}, vol.~34, no.~2, 2018.

\bibitem{you2019advanced}
C.~You, J.~Lu, D.~Filev, and P.~Tsiotras, ``Advanced planning for autonomous
  vehicles using reinforcement learning and deep inverse {RL},'' \emph{Robotics
  and Autonomous Systems}, vol. 114, pp. 1--18, 2019.

\bibitem{seshia2015formal}
S.~A. Seshia, D.~Sadigh, and S.~S. Sastry, ``Formal methods for semi-autonomous
  driving,'' in \emph{{ACM/EDAC/IEEE} {D}esign {A}utomation
  {C}onference}.\hskip 1em plus 0.5em minus 0.4em\relax IEEE, 2015, pp. 1--5.

\bibitem{nikolakis2019cyber}
N.~Nikolakis, V.~Maratos, and S.~Makris, ``A cyber physical system approach for
  safe human-robot collaboration in a shared workplace,'' \emph{Robotics and
  Computer-Integrated Manufacturing}, vol.~56, pp. 233--243, 2019.

\bibitem{xiao2020fresh}
B.~Xiao, Q.~Lu, B.~Ramasubramanian, A.~Clark, L.~Bushnell, and R.~Poovendran,
  ``{FRESH}: {I}nteractive reward shaping in high-dimensional state spaces
  using human feedback,'' in \emph{International Conference on Autonomous
  Agents and MultiAgent Systems}, 2020, pp. 1512--1520.

\bibitem{kahneman1979prospect}
D.~Kahneman and A.~Tversky, ``Prospect theory: {A}n analysis of decision under
  risk,'' \emph{Econometrica}, vol.~47, no.~2, pp. 263--292, 1979.

\bibitem{schmidt2003reference}
U.~Schmidt, ``Reference dependence in cumulative prospect theory,''
  \emph{Journal of Mathematical Psychology}, vol.~47, no.~2, pp. 122--131,
  2003.

\bibitem{barberis2013thirty}
N.~C. Barberis, ``Thirty years of prospect theory in economics: {A} review and
  assessment,'' \emph{Journal of Economic Perspectives}, vol.~27, no.~1, pp.
  173--96, 2013.

\bibitem{tversky1992advances}
A.~Tversky and D.~Kahneman, ``Advances in prospect theory: {C}umulative
  representation of uncertainty,'' \emph{Journal of Risk and uncertainty},
  vol.~5, no.~4, pp. 297--323, 1992.

\bibitem{ramasubramanian2021reinforcement}
B.~Ramasubramanian, L.~Niu, A.~Clark, and R.~Poovendran, ``Reinforcement
  learning beyond expectation,'' in \emph{Conference on Decision and Control
  (CDC)}.\hskip 1em plus 0.5em minus 0.4em\relax IEEE, 2021.

\bibitem{dwork2014algorithmic}
C.~Dwork, A.~Roth \emph{et~al.}, ``The algorithmic foundations of differential
  privacy,'' \emph{Foundations and Trends{\textregistered} in Theoretical
  Computer Science}, vol.~9, no. 3--4, pp. 211--407, 2014.

\bibitem{liao2019prospect}
G.~Liao, X.~Chen, and J.~Huang, ``Prospect theoretic analysis of
  privacy-preserving mechanism,'' \emph{IEEE/ACM Transactions on Networking},
  vol.~28, no.~1, pp. 71--83, 2019.

\bibitem{wang2019privacy}
B.~Wang and N.~Hegde, ``Privacy-preserving {Q}-learning with functional noise
  in continuous spaces,'' \emph{Advances in Neural Information Processing
  Systems}, vol.~32, 2019.

\bibitem{ng2000algorithms}
A.~Y. Ng and S.~J. Russell, ``Algorithms for inverse reinforcement learning,''
  in \emph{International Coference on Machine Learning}, 2000, pp. 663--670.

\bibitem{abbeel2004apprenticeship}
P.~Abbeel and A.~Y. Ng, ``Apprenticeship learning via inverse reinforcement
  learning,'' in \emph{International Coference on Machine Learning}, 2004.

\bibitem{shen2013risk}
Y.~Shen, W.~Stannat, and K.~Obermayer, ``Risk-sensitive {M}arkov control
  processes,'' \emph{SIAM Journal on Control and Optimization}, vol.~51, no.~5,
  pp. 3652--3672, 2013.

\bibitem{shen2014risk}
Y.~Shen, M.~J. Tobia, T.~Sommer, and K.~Obermayer, ``Risk-sensitive
  reinforcement learning,'' \emph{Neural computation}, vol.~26, no.~7, pp.
  1298--1328, 2014.

\bibitem{markowitz1952portfolio}
H.~Markowitz, ``Portfolio selection,'' \emph{The Journal of Finance}, vol.~7,
  no.~1, pp. 77--91, 1952.

\bibitem{tamar2012policy}
A.~Tamar, D.~Di~Castro, and S.~Mannor, ``Policy gradients with variance related
  risk criteria,'' in \emph{International Coference on Machine Learning}, 2012,
  pp. 1651--1658.

\bibitem{mannor2013algorithmic}
S.~Mannor and J.~N. Tsitsiklis, ``Algorithmic aspects of mean--variance
  optimization in {M}arkov decision processes,'' \emph{European Journal of
  Operational Research}, vol. 231, no.~3, pp. 645--653, 2013.

\bibitem{howard1972risk}
R.~A. Howard and J.~E. Matheson, ``Risk-sensitive {M}arkov decision
  processes,'' \emph{Management Science}, vol.~18, no.~7, pp. 356--369, 1972.

\bibitem{whittle1990risk}
P.~Whittle, \emph{Risk-sensitive optimal control}.\hskip 1em plus 0.5em minus
  0.4em\relax Wiley, 1990.

\bibitem{borkar2002q}
V.~S. Borkar, ``Q-learning for risk-sensitive control,'' \emph{Mathematics of
  Operations Research}, vol.~27, no.~2, pp. 294--311, 2002.

\bibitem{rockafellar2002conditional}
R.~T. Rockafellar and S.~Uryasev, ``Conditional value-at-risk for general loss
  distributions,'' \emph{Journal of banking \& finance}, vol.~26, no.~7, pp.
  1443--1471, 2002.

\bibitem{ahmadi2021constrained}
M.~Ahmadi, U.~Rosolia, M.~D. Ingham, R.~M. Murray, and A.~D. Ames,
  ``Constrained risk-averse {M}arkov decision processes,'' in \emph{AAAI
  Conference on Artificial Intelligence}, 2021.

\bibitem{chapman2021risk}
M.~P. Chapman, R.~Bonalli, K.~M. Smith, I.~Yang, M.~Pavone, and C.~J. Tomlin,
  ``Risk-sensitive safety analysis using conditional value-at-risk,''
  \emph{IEEE Transactions on Automatic Control}, 2021.

\bibitem{lindemann2020control}
L.~Lindemann, G.~J. Pappas, and D.~V. Dimarogonas, ``Control barrier functions
  for nonholonomic systems under risk signal temporal logic specifications,''
  in \emph{IEEE Conference on Decision and Control (CDC)}.\hskip 1em plus 0.5em
  minus 0.4em\relax IEEE, 2020, pp. 1422--1428.

\bibitem{majumdar2020should}
A.~Majumdar and M.~Pavone, ``How should a robot assess risk? {T}owards an
  axiomatic theory of risk in robotics,'' in \emph{Robotics Research}.\hskip
  1em plus 0.5em minus 0.4em\relax Springer, 2020, pp. 75--84.

\bibitem{chow2017risk}
Y.~Chow, M.~Ghavamzadeh, L.~Janson, and M.~Pavone, ``Risk-constrained
  reinforcement learning with percentile risk criteria,'' \emph{The Journal of
  Machine Learning Research}, vol.~18, pp. 6070--6120, 2017.

\bibitem{prashanth2018risk}
L.~A. Prashanth and M.~Fu, ``Risk-sensitive reinforcement learning: {A}
  constrained optimization viewpoint,'' \emph{arXiv:1810.09126}, 2018.

\bibitem{jie2018stochastic}
C.~Jie, L.~A. Prashanth, M.~Fu, S.~Marcus, and C.~Szepesv{\'a}ri, ``Stochastic
  optimization in a cumulative prospect theory framework,'' \emph{IEEE
  Transactions on Automatic Control}, vol.~63, no.~9, 2018.

\bibitem{lin2018probabilistically}
K.~Lin, C.~Jie, and S.~I. Marcus, ``Probabilistically distorted risk-sensitive
  infinite-horizon dynamic programming,'' \emph{Automatica}, vol.~97, pp. 1--6,
  2018.

\bibitem{borkar2021prospect}
V.~S. Borkar and S.~Chandak, ``Prospect-theoretic {Q}-learning,'' \emph{Systems
  \& Control Letters}, vol. 156, no.~10, p. 105009, 2021.

\bibitem{cortes2016differential}
J.~Cort{\'e}s, G.~E. Dullerud, S.~Han, J.~Le~Ny, S.~Mitra, and G.~J. Pappas,
  ``Differential privacy in control and network systems,'' in \emph{IEEE
  Conference on Decision and Control}, 2016, pp. 4252--4272.

\bibitem{yazdani2018differentially}
K.~Yazdani, A.~Jones, K.~Leahy, and M.~Hale, ``Differentially private {LQ}
  control,'' \emph{IEEE Transactions on Automatic Control}, 2022.

\bibitem{han2018privacy}
S.~Han and G.~J. Pappas, ``Privacy in control and dynamical systems,''
  \emph{Annual Review of Control, Robotics, and Autonomous Systems}, vol.~1,
  pp. 309--332, 2018.

\bibitem{chistikov2019asymmetric}
D.~Chistikov, A.~S. Murawski, and D.~Purser, ``Asymmetric distances for
  approximate differential privacy,'' in \emph{International Conference on
  Concurrency Theory}, 2019.

\bibitem{ramasubramanian2020privacy}
B.~Ramasubramanian, L.~Niu, A.~Clark, L.~Bushnell, and R.~Poovendran,
  ``Privacy-preserving resilience of cyber-physical systems to adversaries,''
  in \emph{IEEE Conference on Decision and Control}, 2020, pp. 3785--3792.

\bibitem{balle2016differentially}
B.~Balle, M.~Gomrokchi, and D.~Precup, ``Differentially private policy
  evaluation,'' in \emph{International Conference on Machine Learning}, 2016.

\bibitem{vietri2020private}
G.~Vietri, B.~Balle, A.~Krishnamurthy, and S.~Wu, ``Private reinforcement
  learning with {PAC} and regret guarantees,'' in \emph{International
  Conference on Machine Learning}, 2020, pp. 9754--9764.

\bibitem{garcelon2021local}
E.~Garcelon, V.~Perchet, C.~Pike-Burke, and M.~Pirotta, ``Local differential
  privacy for regret minimization in reinforcement learning,'' \emph{Advances
  in Neural Information Processing Systems}, vol.~34, 2021.

\bibitem{gohari2020privacy}
P.~Gohari, M.~Hale, and U.~Topcu, ``Privacy-preserving policy synthesis in
  markov decision processes,'' in \emph{IEEE Conference on Decision and
  Control}, 2020, pp. 6266--6271.

\bibitem{prelec1998probability}
D.~Prelec, ``The probability weighting function,'' \emph{Econometrica}, pp.
  497--527, 1998.

\bibitem{williams2006gaussian}
C.~K. Williams and C.~E. Rasmussen, \emph{Gaussian processes for machine
  learning}.\hskip 1em plus 0.5em minus 0.4em\relax MIT Press, 2006.

\bibitem{hall2013differential}
R.~Hall, A.~Rinaldo, and L.~Wasserman, ``Differential privacy for functions and
  functional data,'' \emph{The Journal of Machine Learning Research}, vol.~14,
  no.~1, pp. 703--727, 2013.

\bibitem{lalley2013gaussian}
S.~Lalley, ``Introduction to {G}aussian processes,''
  \emph{\url{https://galton.uchicago.edu/~lalley/Courses/386/GaussianProcesses.pdf}},
  2013.

\end{thebibliography}

\end{document}